\newtheorem{theorem}{Theorem}
\newenvironment{proof}{{\noindent\it Proof. }}{\hfill $\blacksquare$\par}
\begin{document}
\title{UAV Virtual Antenna Array Deployment for Uplink Interference Mitigation in Data Collection Networks}

\author{Hongjuan Li, Hui Kang, Geng Sun,~\IEEEmembership{Senior~Member,~IEEE}, Jiahui Li, Jiacheng Wang, \\Xue Wang,~\IEEEmembership{Senior~Member,~IEEE}, Dusit Niyato,~\IEEEmembership{Fellow,~IEEE}, and Victor C. M. Leung,~\IEEEmembership{Life~Fellow,~IEEE}

\thanks{This work is supported in part by the National Natural Science Foundation of China (62272194, 62172186, 62471200), in part by the National Key Research and Development Program of China (2022YFB4500600), in part by the Science and Technology Development Plan Project of Jilin Province (20240302079GX), in part by the Postdoctoral Fellowship Program of China Postdoctoral Science Foundation (GZC20240592), in part by China Postdoctoral Science Foundation General Fund (2024M761123), in part by the Graduate Innovation Fund of Jilin University (2024CX318, 2024CX319), in part by the National Research Foundation, Singapore, and Infocomm Media Development Authority under its Future Communications Research \& Development Programme, Defence Science Organisation (DSO) National Laboratories under the AI Singapore Programme (FCP-NTU-RG-2022-010 and FCP-ASTAR-TG-2022-003), in part by the Singapore Ministry of Education (MOE) Tier 1 (RG87/22), in part by the NTU Centre for Computational Technologies in Finance (NTU-CCTF), and in part by Seitee Pte Ltd. Part of this paper appeared in WASA 2022~\cite{Li2022a}. (\emph{Corresponding authors: Geng Sun and Jiahui Li.})}

\thanks{Hongjuan Li is with the College of Computer Science and Technology, Jilin University, Changchun 130012, China (e-mail: hongjuan23@mails.jlu.edu.cn).}
\thanks{Hui Kang is with the College of Computer Science and Technology, Jilin University, Changchun 130012, China, and also with Key Laboratory of Symbolic Computation and Knowledge Engineering of Ministry of Education, Jilin University, Changchun 130012, China (e-mail: kanghui@jlu.edu.cn).}
\thanks{Geng Sun is with the College of Computer Science and Technology, Jilin University, Changchun 130012, China, and with Key Laboratory of Symbolic Computation and Knowledge Engineering of Ministry of Education, Jilin University, Changchun 130012, China; he is also affiliated with the College of Computing and Data Science, Nanyang Technological University, Singapore 639798 (e-mail: sungeng@jlu.edu.cn).}
\thanks{Jiahui Li is with the College of Computer Science and Technology, Jilin University, Changchun 130012, China (e-mail: lijiahui@jlu.edu.cn).}
\thanks{Jiacheng Wang is with the College of Computing and Data Science, Nanyang Technological University, Singapore (e-mail: jiacheng.wang@ntu.edu.sg). }
\thanks{Xue Wang is with the Department of Communication Engineering, Jilin University, Changchun 130012, China (e-mail: txwangxue@jlu.edu.cn).}
\thanks{Dusit Niyato is with the College of Computing and Data Science, Nanyang Technological University, Singapore (e-mails: dniyato@ntu.edu.sg). }
\thanks{Victor C. M. Leung is with the Artificial Intelligence Research Institute, Shenzhen MSU-BIT University, Shenzhen 518115, China, with the College of Computer Science and Software Engineering, Shenzhen University, Shenzhen 518060, China, and also with the Department of Electrical and Computer Engineering, The University of British Columbia, Vancouver V6T 1Z4, Canada (e-mail: vleung@ieee.org).}

\thanks{Copyright (c) 20xx IEEE. Personal use of this material is permitted. However, permission to use this material for any other purposes must be obtained from the IEEE by sending a request to pubs-permissions@ieee.org.}
}

\markboth{Journal of \LaTeX\ Class Files,~Vol.~14, No.~8, August~2021}%
{Li \MakeLowercase{\textit{et al.}}: Uplink Interference Mitigation in Data Collection Networks}


\maketitle

\begin{abstract}
Unmanned aerial vehicles (UAVs) have gained considerable attention as a platform for establishing aerial wireless networks and communications. However, the line-of-sight dominance in air-to-ground communications often leads to significant interference with terrestrial networks, reducing communication efficiency among terrestrial terminals. This paper explores a novel uplink interference mitigation approach based on the collaborative beamforming (CB) method in multi-UAV network systems. Specifically, the UAV swarm forms a UAV-enabled virtual antenna array (VAA) to achieve the transmissions of gathered data to multiple base stations (BSs) for data backup and distributed processing. However, there is a trade-off between the effectiveness of CB-based interference mitigation and the energy conservation of UAVs. Thus, by jointly optimizing the excitation current weights and hover position of UAVs as well as the sequence of data transmission to various BSs, we formulate an uplink interference mitigation multi-objective optimization problem (MOOP) to decrease interference affection, enhance transmission efficiency, and improve energy efficiency, simultaneously. In response to the computational demands of the formulated problem, we introduce an evolutionary computation method, namely chaotic non-dominated sorting genetic algorithm II (CNSGA-II) with multiple improved operators. The proposed CNSGA-II efficiently addresses the formulated MOOP, outperforming several other comparative algorithms, as evidenced by the outcomes of the simulations. Moreover, the proposed CB-based uplink interference mitigation approach can significantly reduce the interference caused by UAVs to non-receiving BSs.
\end{abstract}

\begin{IEEEkeywords}
UAV network, interference mitigation, collaborative beamforming, multi-objective optimization, evolutionary computation.
\end{IEEEkeywords}

\section{Introduction}
\label{sec:introduction}
\IEEEPARstart{D}{ue} to their significant benefits of flexibility and autonomy, the utilization of unmanned aerial vehicles (UAVs) in military scenarios has become prevalent~\cite{Zeng2019}. In recent years, civilian and commercial UAVs have gained widespread applications with the reduction in manufacturing costs and advancements in manufacturing technology~\cite{Liu2024}. Among various UAV-enabled technologies, UAV wireless network and communication technologies have attracted more and more attention from domestic and international scholars~\cite{Li2023, Du2024}. For example, UAVs can be designed as aerial relays for offering data forwarding services to several ground users (GUs) that lack reliable links~\cite{Sun2022}. Moreover, UAVs can be adopted to achieve the integration of sensing and communication functions by carrying relevant sensors, thereby improving the utilization rate of radio resources~\cite{Deng2023}. Furthermore, UAVs can be utilized as data collection platforms for the Internet of Things (IoT) systems~\cite{AbdelBasset2024}, in which the line-of-sight (LoS) dominant channel can reduce signal propagation loss and enhance data transmission performance.
\par Although the LoS dominant channel offers several advantages, UAV-based data collection still struggles with some critical challenges, including limited onboard energy and restricted transmission power~\cite{Wu2020}, which can limit the long-range communication capability of UAVs. In addition, UAV networks may also cause severe interference to terrestrial devices. Furthermore, numerous interference mitigation techniques, including trajectory design and transmission power control, have been proposed to address the interference issues~\cite{Mei2019, Liu2021}.
\par Among them, the power control-based interference mitigation method can reduce interference to terrestrial networks by controlling the transmission powers of UAVs. Specifically, this approach limits the power of the UAV to a level where the targeted receiver has full reception and low interference with other communication devices. In this case, the data transmission rate of UAVs can be decreased~\cite{Yajnanarayana2018}. On the other hand, the trajectory design-based interference mitigation method refers to designing the UAV flight paths in accordance with the distribution of terrestrial terminals, so that they can transmit data at the positions far from these terminals. However, changing the positions of UAVs may lead to a longer transmission distance to the targeted receiver, thus increasing the data transmission time of UAVs~\cite{Lee2022}. Therefore, there is an urgent need to propose an innovative interference mitigation approach aimed at minimizing interference and lessening transmission time, simultaneously.
\par Collaborative beamforming (CB) is a critical method in the field of collaborative communications, aiming to remarkably enhance the data transmission capability of a single element by utilizing the cooperative work of multiple elements. In the UAV-enabled communication systems, CB can be accomplished by deploying a virtual antenna array (VAA), which consists of omnidirectional antennas equipped on multiple UAVs to simulate the effect of an antenna array. By jointly designing hover positions and excitation current weights of the UAV swarm, the signal strength and direction can be precisely controlled, thereby achieving beamforming with high directivity and high gain~\cite{Sun2024}. Based on these properties, the efficacy of data transmission is significantly enhanced through the optimization of the directivity of the mainlobe. For example, in~\cite{Li2024}, the authors applied the CB method to reduce the time in UAV-assisted data harvesting and dissemination. Moreover, the interference to terrestrial terminals can be mitigated since the signal intensity in directions other than the mainlobe is suppressed. Thus, the dual goals of transmission performance improvement and interference mitigation in UAV-enabled data collection networks can be achieved via CB.
\par The position of the UAV is the crucial element that impacts the performance of CB. In other words, UAVs must move the optimal positions for performing the CB. However, more energy can be consumed when UAVs frequently change their positions, which can shorten the lifetime of UAV-enabled data collection networks. As a result, we formulate a multi-objective optimization problem (MOOP) aimed at striking a balance between the interference mitigation capabilities and energy efficiency of UAVs. This balance is essential to satisfy diverse requirements for interference mitigation and energy consumption across various scenarios.
\par The following outlines the key contributions of this work:
\begin{itemize}
  \item \emph{\textbf{CB-based Uplink Interference Mitigation Method:}} We introduce the CB method in multi-UAV systems and propose a novel approach for mitigating interference to terrestrial networks during uplink data transmission. Unlike existing approaches that rely on power control or trajectory design, the CB-based method effectively reduces interference with non-receivers while improving the transmission efficiency of UAVs.
  \item \emph{\textbf{Multi-Objective Optimization Problem:}} We formulate an uplink interference mitigation MOOP for optimizing the transmission efficiency, increasing the signal-to-interference-plus-noise ratio (SINR) of the interfered base stations (BSs), and minimizing propulsion energy consumed by UAVs, simultaneously. Furthermore, we prove that the MOOP is NP-hard, highlighting its computational complexity.
  \item \emph{\textbf{Chaotic Evolutionary Computation Algorithm:}} We propose a chaotic non-dominated sorting genetic algorithm II (CNSGA-II) to deal with the complex MOOP. This algorithm integrates chaos theory into solution initialization, crossover, and mutation operators, improving the quality of the initial population and balancing exploration and exploitation. Additionally, CNSGA-II employs an enhanced elimination strategy to discard non-competitive solutions and improve convergence accuracy.
  \item \emph{\textbf{Extensive Simulation Analysis:}} We perform comprehensive simulations to assess the performance of the CB-based approach in mitigating interference. The results indicate that our method effectively reduces interference to terrestrial networks and outperforms several comparison algorithms.
\end{itemize}
\par The remainder of this work is structured as follows. The related works are presented in Section \ref{sec:related_work}. Section \ref{sec:system_model} introduces the related models utilized in this paper. Section \ref{sec:problem_formulation_and_analysis} formulates a MOOP to simultaneously reduce the interference of the UAVs to non-receiving BSs and minimize the transmission time and propulsion energy consumption. Section \ref{sec:the_proposed_method} proposes the CNSGA-II. Section \ref{sec:simulation_results_and_analysis} provides simulations and this work is summarized in Section \ref{sec:conclusion}.

%
%
\section{Related Work}
\label{sec:related_work}
\par This section provides a brief review of prior works that focus on interference mitigation methods and CB.
\subsection{Interference Mitigation}
\par Most existing works on interference mitigation in wireless networks focused on reducing interference by controlling the transmission power, designing the flight trajectory, or optimizing the channel selection of transmitters.
\par Some works aimed to reduce interference in wireless networks by limiting the transmit power to a level that ensures reliable reception by the target receiver and low interference with other communication devices. For example, Li \emph{et al}.~\cite{Li2019} adopted a mean field game to represent the interference mitigation problem and then proposed an optimal power control method to solve the problem. Moreover, Qi \emph{et al}.~\cite{Qi2024} designed a jamming strategy that maximizes interference on hostile users while controlling mutual interference by jointly optimizing the power and position of the jammer and the transmit powers of allies.
\par Several works optimized the flight path of UAVs based on the spatial distribution of terrestrial terminals for enabling data transmission from locations farther away from these terminals. In~\cite{Liu2024a}, the authors designed a UAV trajectory planning method based on block coordinate descent, which aims to effectively reduce the UAV task completion time while meeting the time and energy consumption constraints and mitigating interference with the terrestrial equipment. In addition, Lee \emph{et al}.~\cite{Lee2022} proposed an interference-aware path planning optimization approach aimed at minimizing co-channel interference among UAVs while serving GUs by optimizing UAV flight paths.
\par Several works considered jointly optimizing the trajectory and power of UAVs for interference mitigation. For example, Shen \emph{et al}.~\cite{Shen2020} designed a UAV-enabled interference channel by jointly optimizing trajectory and power control to decrease the cross-link interference in UAV networks. Moreover, in~\cite{Burhanuddin2023}, the authors designed a downlink inter-cell interference management method based on the deep-Q network to reduce the interference effect between BSs and Gus while meeting the transmission rate requirements of UAVs. Furthermore, Wang \emph{et al}.~\cite{Wang2022} investigated an interference coordination strategy for UAV-relaying in 5G/6G spectrum sharing networks, in which the authors aimed to coordinate mutual interference among ground nodes and improve energy efficiency by jointly optimizing UAV transmit power and trajectory.
\par Recently, several works have been proposed to achieve interference mitigation by optimizing channel selection. For instance, in~\cite{Su2024}, the authors explored a channel selection-based approach for anti-jamming and interference mitigation in UAV networks for addressing challenges from both malicious jamming and co-channel interference. Furthermore, Zhou \emph{et al}.~\cite{Zhou2021} investigated a novel resource allocation strategy for multi-UAV systems to combat adjacent and co-channel interference, in which the minimum SINR across UAVs was maximized by optimizing both channel and power allocation. In addition, Vaezi \emph{et al}.~\cite{Vaezi2024} pointed out that deep reinforcement learning (DRL) can be applied to mitigate interference in UAV-assisted cellular networks without accurate channel state information (CSI) by adjusting transmit power and dynamically selecting communication channels.
\par We summarize the differences from the above-mentioned related studies as follows. \emph{First}, most of these methods consider a power allocation or trajectory optimization approach to reduce interference, which may decrease the transmission efficiency of the data collection and transmission process. \emph{Second}, the effectiveness of some channel selection-based approaches is constrained by the number of available channels and the need for frequent channel sensing and allocation, which leads to higher computational overhead and increased energy consumption. \emph{Third}, the above works do not consider adopting multi-objective optimization to find the different trade-offs between interference performance, data transmission time, and energy consumption.
\subsection{Collaborative Beamforming}
\par There are published works that take the CB method into account for obtaining high-performance transmission in different communication systems. For example, Sun \emph{et al}.~\cite{Sun2022} introduced some typical CB applications in UAV networks and proposed two schemes for UAV communication based on CB for achieving high-performance communication. Moreover, Li \emph{et al}.~\cite{Li2024} proposed an innovative scheme to achieve time and energy-saving harvesting and dissemination of IoT data by integrating UAV technology and the CB method. In addition, Sun \emph{et al}.~\cite{Sun2022a}studied the problem of UAV-enabled relay communication and proposed an improved evolutionary computation approach for achieving secure and energy-efficient communication to multiple GUs. Furthermore, in~\cite{Li2022}, the authors utilized CB to address the long-distance and energy-saving uplink transmission problem in UAV-assisted mobile wireless sensor networks. Additionally, Li \emph{et al}.~\cite{Li2024a} explored an uplink communication approach from terrestrial terminal to satellite based on the distributed CB. Although the above works can effectively solve some problems in wireless networks by applying the CB method, they ignore a key issue, namely the interference of the data propagation process to non-target receivers.

\par Some emerging techniques, such as movable antenna (MA) and integrated sensing and communication, have been applied to improve the performance of wireless communication systems, in which similar to the UAV-enabled VAA used in the CB method, the MA method aimed to optimize signal reception and transmission by adjusting the positions of multiple antennas to reconstruct the array geometry~\cite{Zhu2024}. For example, Ma \emph{et al}.~\cite{Ma2024} explored the application of MA in multiple-input multiple-output (MIMO) communication systems, in which the channel power and the condition number of the MIMO channel can be improved and reduced, respectively. Moreover, in~\cite{Zhu2024a}, the authors evaluated the performance of MA in wireless communication systems and demonstrated that MA can significantly enhance signal strength and improve communication reliability. Furthermore, Hu \emph{et al}.~\cite{Hu2024} explored MA array-assisted physical layer secure communications, in which the authors focused on enhancing secrecy performance by optimizing the transmit beam pattern and the positions of MAs at Alice. Additionally, Cheng \emph{et al}.~\cite{Cheng2024} investigated a novel secure transmission method, which aimed to achieve enhanced physical layer security by jointly designing the beamformer and the positions of MAs. In addition, Zhu \emph{et al}.~\cite{Zhu2023} studied enhanced beamforming of MA by adjusting the positions and weights of antenna elements, which aimed to maximize array gain in the desired direction while suppressing interference in undesired directions. Similarly, Ren \emph{et al}.~\cite{Ren2024} explored integrating the six-dimensional MA array into cellular-connected UAV systems for interference mitigation and enhanced communication.
\par We summarize the differences between CB and MA as follows. \emph{First}, the MA method does not require inter-UAV data sharing or synchronization, simplifying system design and reducing communication overhead, while offering more stable beamforming due to precise mechanical adjustments. However, it involves higher equipment costs and complex mechanical integration, making it less cost-effective compared to the simple omnidirectional antennas used in UAV-enabled VAA systems. \emph{Second}, MAs have lower control complexity since they can operate independently on a single UAV. In contrast, UAV-enabled VAA requires precise coordination of multiple UAVs to form a virtual array, which increases deployment complexity but provides greater flexibility. For example, the number of UAVs forming the VAA can be dynamically adjusted based on mission requirements, such as 8 or 16 UAVs, to adapt to diverse scenarios. \emph{Third}, while MAs are effective in improving communication efficiency and mitigating interference, their limited flexibility and scalability restrict their coverage in large-scale data collection tasks. In comparison, UAVs in the VAA can move freely when not forming the array, enabling them to cover larger areas for monitoring and data collection.

\begin{table}[tbp]
    \centering
    \caption{Main notations and descriptions.}
    \begin{tabular}{p{2cm} p{6cm}}
    \hline
    \bf{Notation} & \bf{Description}\\
    \hline
    $\mathcal{U}$ & The set of UAVs.\\
    $\mathcal{B}$ & The set of BSs.\\
    $A_{m}$ & Monitoring area.\\
    $\mathcal{F}(\theta, \phi)$ & Array factor.\\
    $I_{i}$ & Excitation current weight. \\
    $\lambda$ & wavelength.\\
    $P_{LoS}$ & The probability of LoS link. \\
    $P_{NLoS}$ & The probability of NLoS link.\\
    $\theta_{e}$ & Elevation angle between VAA and targeted BS.\\
    $k_{0}$ & Pathloss constant.\\
    $\alpha$ & Pathloss exponent.\\
    $d_{j}$ & The distance from the VAA to the $j$BS.\\
    $\xi_{1}$ & The attenuation coefficients of LoS link.\\
    $\xi_{2}$ & The attenuation coefficients of NLoS link.\\
    $g_{BS_{j}}$ & Channel power gain toward the $j$th BS.\\
    $P_{t}$ & Total transmission power of the VAA.\\
    $\mathcal G_{BS_{j}}$ & Antenna gain of the VAA towards the $j$th BS.\\
    $P_{BS_{j}}$ & Receiver power of the $j$th BS from the VAA.\\
    $B$ & Transmission bandwidth.\\
    $\sigma^2$ & Noise power.\\
    $\mathcal R_{BS_{j}}$ & Transmission rate from the VAA to the $j$th BS.\\
    $\mathcal {Y}_{BS_{in}}$ & The SINR of the interfered BS.\\
    $P_1$ & Blade profile power when the UAV keeps hovering. \\   
    $P_2$ & Blade induced power when the UAV keeps hovering. \\
    $v_{tip}$ & Tip speed of the rotor blade. \\
    $v_{0}$ & Average rotor-induced speed during hovering. \\
    $d_0$ & Fuselage drag ratio. \\
    $s$ & Rotor solidity. \\
    $A$ & Rotor disc area. \\
    $\rho$ & Air density. \\
    $V$ & Flying speed of UAVs.\\
    $\mathcal {P}\left(V\right)$ & Energy consumption in two-dimensional (2D) plane generated by UAV.\\
    $\mathcal E(T)$ & Energy consumption in three-dimensional (3D) space generated by UAV.\\
    $m_{u}$ & The mass of UAV.\\
    \hline
    \end{tabular}
    \label{table:symbols}
\end{table}
%
%
\section{System Model}
\label{sec:system_model}
\par In this section, we outline some of the key models utilized in this work. Moreover, Table~\ref{table:symbols} shows the main notations used in the system model and their corresponding descriptions.
\subsection{Network Model}
\par Fig. \ref{fig:system-model} shows a UAV-enabled data collection and air-to-ground (A2G) transmission scenario, where $N_{U}$ UAVs ($\mathcal U = \{1, 2, \dots, N_{U}\}$) are deployed at random in the monitoring area represented as $A_m$ for performing data collection task, and the flights of these UAVs cannot exceed the monitoring area. Meanwhile, some BSs ($\mathcal B = \{1, 2, \dots, N_{B}\}$) that are remote from $A_m$ are used to receive and process data from the UAVs. When the task is completed or the collected data reaches the cache limit, these UAVs will upload the obtained data to BSs for analysis and preservation. However, other BSs may be communicating with GUs during the process of the UAVs transmitting data to BS $i$, which means that these data transmission processes will decrease the communication performance between BSs and GUs. In particular, since the high flight altitude of UAVs and high LoS probability of A2G transmissions, UAVs will cause a wider range and stronger intensity of interference to terrestrial networks. Thus, these UAVs form a UAV-enabled VAA for transmissions with BSs through the CB. Different from~\cite{Li2022a} which only considers one target BS, in the scenario considered in this work, UAVs transmit the obtained data to multiple target BSs. This is because data transmission to multiple BSs is imperative for data backup and data security in some cases~\cite{Zhang2023}. Moreover, multiple target BSs can achieve data sharing and decentralized processing, which can meet some practical scenario requirements that require rapid response.
\label{ssec:network_model} 
\begin{figure}
	\centering
	\includegraphics[width=3.5in]{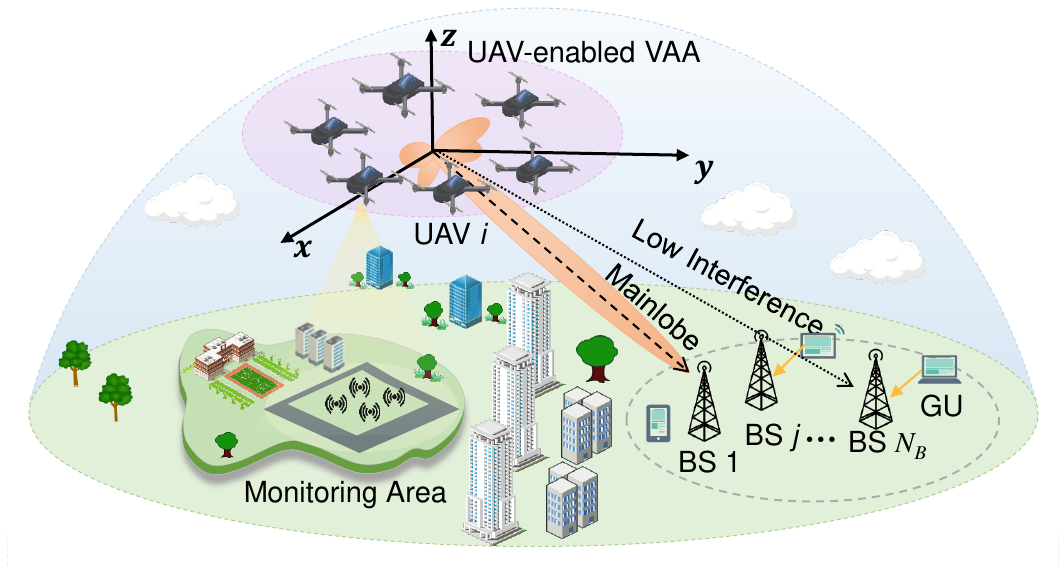}
	\caption{Sketch map of a UAV-enabled VAA model for CB.}
	\label{fig:system-model}
\end{figure}
\par For ease of expression, a 3D Cartesian coordinate system is utilized, in which $(x^{B}_{j}, y^{B}_{j}, 0)$ and $(x^{U}_{i}, y^{U}_{i}, z^{U}_{i})$ respectively indicate the positions of the $j$th BS and the $i$th UAV.
\subsubsection{Array Factor (AF) of UAV-enabled VAA}
\label{ssec:array_factor_of_UVAA}
\par Single omni-directional antennas equipped on multiple UAVs form a UAV-enabled VAA to enhance the signal towards the targeted BS, where AF can be used to show the signal strength of the VAA in all directions, and AF is modeled as follows~\cite{Sun2021}:
\begin{equation}
   \begin{aligned}
      \mathcal F(\theta, \phi)=
      \sum \limits _{i = 1}^{N_{U}} I_{i} e^{k \left [ 2\pi/\lambda \left ({{{x^{U}_{i}\sin \theta \cos \phi + {y^{U}_{i}}\sin \theta \sin \phi + {z^{U}_{i} }}\cos \theta } }\right)\right]},
   \end{aligned}
   \label{eq:AF}
\end{equation}
\noindent where $I_{i}$ and $(x^{U}_{i}, y^{U}_{i}, z^{U}_{i})$ respectively indicate the excitation current weight and position coordinate of UAV $i$. Moreover, $2\pi/\lambda$ denotes the constant related to phase and $\lambda$ is the wavelength.
\par In our considered scenario, we assume that data synchronization between UAVs performing the VAA is accomplished through the data-sharing protocol proposed in~\cite{Feng2013}. In this process, we can select the UAV with the most remaining energy as the master node to collect and integrate the data obtained by other UAVs, and then the selected master node broadcasts the final data packet to all UAVs. Furthermore, it is noted that the timing, phase, and frequency synchronization of these UAVs can be completed by using the synchronization protocols and techniques described in~\cite{Mohanti2019},~\cite{Boyle2017}, and~\cite{Alemdar2021}. Based on this, the backhaul connection between these UAVs can achieve reliable baseband data transmission and successfully construct the desired beam patterns.
\par Moreover, some existing CSI evaluation works specifically proposed for CB can be adopted to achieve CSI acquisition~\cite{Zhang2024, Ahmad2022}. Specifically,~\cite{Ahmad2022} proposed a codebook-based channel quantization method for CB, which can use a small number of bits to achieve performance comparable to perfect channel feedback. In this case, the imperfect CSI due to CSI errors will not cause much performance loss to such CB-based approaches~\cite{Li2024}. Thus, we assume that the UAV-enabled VAA can embed this method, and then use the CSI quantization results to determine their transmission parameters, such as excitation current weights and residual phases. Note that we select a UAV to perform and coordinate the entire optimization process and the details will be introduced in Section \ref{ssec:Optimization_Strategy}.

\subsubsection{Channel Model}
\label{ssec:channel_model}
\begin{figure}
	\centering
	\includegraphics[width=3.0in]{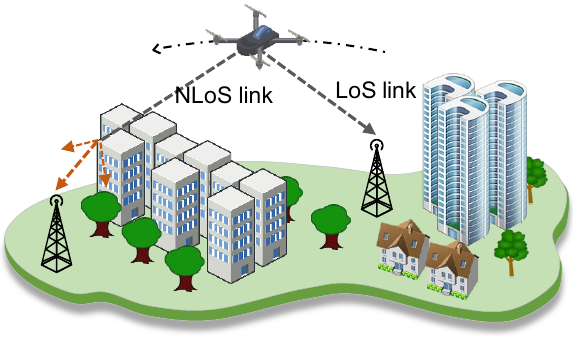}
	\caption{System model of an A2G transmission in the probabilistic LoS channel.}
	\label{fig:LoS-NLoS}
\end{figure}
\par Fig. \ref{fig:LoS-NLoS} shows an A2G transmission scenario in the probabilistic LoS channel, where the LoS links between UAVs and terrestrial BSs may be blocked by buildings. To better fit the practical real scenarios, we adopt the channel model used in~\cite{Zeng2019} and~\cite{Chen2017}, and the probability of LoS link is modeled as follows~\cite{Sun2022}:
\begin{equation}
   \begin{aligned}
      P_{L o S}= [1+k_{1} exp(-k_{2}(\theta_{e}-k_{1}))]^{-1},
   \end{aligned}
   \label{eq:P_LoS}
\end{equation}
\noindent where $k_{1}$ and $k_{2}$ are parameters related to the environment. In addition, $\theta_{e}$ is the elevation angle between the VAA and the targeted BS, which is given by
\begin{equation}
   \begin{aligned}
      \theta_{e} = 180/\pi\times\sin^{-1}(h_{u}/d_{2D}),
   \end{aligned}
   \label{eq:theta_e}
\end{equation}
\noindent where $d_{2D}$ and $h_{u}$ denote the 2D horizontal distance and vertical altitude from the UAV-enabled VAA to targeted BS, respectively.
\par According to Eq. (\ref{eq:P_LoS}), the non-LoS (NLoS) probability $P_{NLoS}$ can be given as $P_{NLoS} = 1 - P_{LoS}$. Therefore, the channel power gain is defined as follows:
\begin{equation}
   \begin{aligned}
      g_{B S_{j}}=[k_{0}d_{j}^{\alpha}(\xi_{1}P_{L o S}+\xi_{2}P_{N L o S})]^{-1},
   \end{aligned}
   \label{eq:g_c}
\end{equation}
\noindent where $k_{0}$ and $\alpha$ are the pathloss constant and the pathloss exponent, respectively. Moreover, $d_{j}$ indicates the distance from the VAA to BS $j$. In addition, $\xi_{1}$ and $\xi_{2}$ respectively denote the attenuation coefficients of LoS and NLoS links. Note that this probabilistic LoS model takes into account the obstacles and environmental changes in the channel, making the model closer to the actual situation.
\subsubsection{Transmission and Interference Model}
\label{ssec:transmission_and_interference_model}
\par Based on the probabilistic LoS channel model introduced above, the receiver power of BS $j$ from the UAV-enabled VAA is given by
\begin{equation} 
    \begin{aligned}
       P_{BS_{j}}= \frac{P_t\mathcal G_{BS_{j}}}{g_{B S_{j}}},
    \end{aligned}  
    \label{eq:receiver-power} 
\end{equation}
\noindent where $P_{t}$ is the total transmission power of the VAA. $\mathcal G_{BS_{j}}$ indicates the antenna gain of the VAA towards BS $j$, which is modeled as follows~\cite{Mozaffari2019}:
\begin{equation} 
    \begin{aligned}
       \mathcal G_{BS_{j}}=\frac{4 \pi\left|\mathcal F\left(\theta_{B S_{j}}, \phi_{B S_{j}}\right)\right|^{2} \omega\left(\theta_{B S_{j}}, \phi_{B S_{j}}\right)^{2}}{\int_{0}^{2 \pi} \int_{0}^{\pi}|\mathcal F(\theta, \phi)|^{2} \omega(\theta, \phi)^{2} \sin \theta d \theta d \phi} \eta,
    \end{aligned}  
    \label{eq:G} 
\end{equation}
\noindent where $\eta \in [0,1]$ denotes the efficiency of antenna array. Moreover, $(\theta_{B S_{j}}, \phi_{B S_{j}})$ represents the direction towards the position of the $j$th BS, and $\omega(\theta, \phi)$ indicates the magnitude of the far-field beam pattern of each UAV.
\par According to Eq. (\ref{eq:receiver-power}), the transmission rate from the VAA to targeted BS $j$ is expressed as follows~\cite{Mozaffari2019a}:
\begin{equation} 
    \begin{aligned}
       \mathcal R_{BS_{j}}=B \log_2\left(1+\frac{P_{BS_{j}}}{\sigma^2}\right),
    \end{aligned}  
    \label{eq:Transmission-rate-BS} 
\end{equation}
\noindent where $\sigma^2$ is noise power, and $B$ indicates the transmission bandwidth.
\par When the VAA is communicating with the targeted BS, other BSs may be communicating with some GUs, in which the targeted power received by these BSs from GUs is $P_{GU}$. Note that the $P_{GU}$ is set as a constant for ease of calculation. Based on the broadcast characteristics of wireless communication and LoS dominant channel features, these BSs also receive the signal power $P_{BS_{in}}$ from the VAA, and for these BSs, $P_{BS_{in}}$ is the interference power. Consequently, the SINR of the interfered BS can be defined as follows~\cite{Mozaffari2016}:
\begin{equation} 
    \begin{aligned}
       \mathcal {Y}_{BS_{in}}=\frac{{P}_{GU}}{\sigma^2+ {P}_{BS_{in}}}.
    \end{aligned}  
    \label{eq:Transmission-SINR-BS} 
\end{equation}

\subsection{Energy Consumption Model}
\label{ssec:energy_consumption_model_of_UAV}
\par As illustrated in~\cite{Zeng2019a}, the communication-related energy consumption of UAVs is much smaller than propulsion energy consumption. In this case, there is no need to account for the communication overhead of UAVs. According to Eq. (12) proposed in~\cite{Zeng2019a}, the energy consumption in a 2D horizontal plane for UAVs  is directly proportional to their flying speed $V$, which can be given by
\begin{equation}
  \label{eq:energy-2d}
   \begin{aligned}
      \mathcal {P}\left(V\right)&=P_1\left(1+\frac{3V^2}{v^{2}_{tip}}\right)+P_2\left(\sqrt{1+\frac{V^4}{4v^4_0}}-\frac{V^2}{2v^2_0}\right)^{1/2}\\
      &+\frac{1}{2} \rho d_0sAV^3,
   \end{aligned}
\end{equation}
\noindent where $P_{1}$ and $P_{2}$ denote the blade profile power and induced power when the UAV maintains hovering, respectively. In addition, $v_{tip}$ and $v_{0}$ represent the tip speed of the rotor blade and the average rotor induced speed during hovering, respectively. Moreover, $d_{0}$ is the fuselage drag ratio, $s$ and $A$ are rotor solidity and rotor disc area, respectively, and $\rho$ is the air density.
\par Furthermore, the propulsion energy consumption in 3D space for UAVs can be given based on Eq. (23) described in~\cite{Zeng2019}, which is given by
\begin{equation}
  \label{eq:energy-3d}
   \begin{aligned}
     \mathcal E(T) \approx \int_0^T \mathcal P(V_{t})dt+{\frac12m_{U}(V_{t}^2- V_{0}^2)}+{m_{U}gh },
   \end{aligned}
\end{equation}
\noindent where $\mathcal P(\cdot)$ is given by Eq. (\ref{eq:energy-2d}), and $V_t$ denotes the flight speed of the UAV at the moment $t$. Moreover, $m_U$ is the mass of the UAV, $h$ represents the change in the altitude of the UAV during the flight, and $g$ indicates the gravitational acceleration. In addition, $T$ denotes the end time of the UAV motion.
\par Note that as suggested in~\cite{Yang2020, Chou2020}, the oblique flight of a UAV is modeled as two phases which are horizontal motion and vertical motion. This is due to the absence of accurate energy consumption models for UAVs in 3D space~\cite{Zeng2019}. Moreover, rotary-wing UAVs need precise control to maintain flight stability, and oblique flights may introduce operational uncertainty.
%
%
\section{Problem Formulation and Analysis}
\label{sec:problem_formulation_and_analysis}
\par In our considered system, highly efficient and low-interference communication links between UAVs and terrestrial BSs need to be established. Specifically, the UAV swarm should establish contact with terrestrial BSs as soon as possible to transmit the data collected in area $A_m$. Moreover, owing to energy limitations, the UAV swarm cannot directly fly to distant BSs for data transmission. As a result, the UAV swarm forms a VAA to establish communication with various BSs via CB. Note that this considered system permits the transmission of diverse data types to various BSs, and the VAA can also send the same data for data sharing~\cite{Zhang2021}. However, since the high altitude of UAVs and the communication links with high LoS probability, the interference of the VAA to terrestrial networks has to be taken into account.
\par According to the basic principle of the CB method, we can improve transmission efficiency while reducing interference by enhancing the UAV-enabled VAA beam pattern. UAVs need to move the best positions and adjust to the best excitation currents for enhancing the VAA performance, \emph{i.e}., the directionality of the high-gain mainlobe to the targeted BS needs to be improved. However, the changes in the positions of UAVs generate extra energy consumption. Moreover, the mainlobe of the VAA can only be directed in a single direction at a time, which means that UAVs need to adjust their positions after each data transmission with the targeted BS is completed for communication with the next BS. Thus, a large amount of propulsion energy consumption will be generated for transmission with multiple BSs. To increase the lifetime of the considered system, energy-saving also should be considered, which can be achieved by jointly optimizing the positions of the UAVs and the communication sequence with multiple BSs.
\begin{table*}
\caption{Notations of the decision variables.\label{table:notations}}
    \centering
    \begin{tabular}{p{2.7cm} p{4cm} p{5cm} p{4.5cm}}
	\toprule
	\textbf{Symbol}	& \textbf{Variable elements}	& \textbf{Descriptions} & \textbf{Example}\\
	\midrule
	$\mathbb I^{\mathcal B \times \mathcal U}$
	& $\{ I_{j,i} | \forall j \in \mathcal{B}, \forall i \in \mathcal{U}\}$
	& $\mathbb I^{\mathcal B \times \mathcal U}$ is the set of excitation current weights of UAVs when communicating with different BSs, while $I_{j,i}$ is the excitation current weight of UAV $i$ for transmitting data to BS $j$.
	& $I_{1,6} = 0.7$ represents the excitation current weight of the 6th UAV when transmitting data to the 1st BS is 0.7.
	\\
	\midrule
	$(\mathbb X^{\mathcal B \times \mathcal U}, \mathbb Y^{\mathcal B \times \mathcal U}, \mathbb Z^{\mathcal B \times \mathcal U})$
	& $\{ x_{j,i} / y_{j,i} / z_{j,i} | \forall j \in \mathcal{B}, \forall i \in \mathcal{U}\}$
	& $(\mathbb X^{\mathcal B \times \mathcal U}, \mathbb Y^{\mathcal B \times \mathcal U}, \mathbb Z^{\mathcal B \times \mathcal U})$ is the set of 3D position coordinates of UAVs when communicating with different BSs, while $(x_{j,i},y_{j,i},z_{j,i})$ is the position of UAV $i$ for transmitting data to BS $j$.
	& $(x_{1,6},y_{1,6},z_{1,6}) = (30,30,85)$ represents the position of the 6th UAV when transmitting data to the 1st BS is (30,30,85).
	\\
	\midrule
	$\mathbb Q^{\mathcal B \times 1}$
    & $\{ Q_{1} \dots Q_{N_{B}}| \forall j \in \mathcal{B}, \forall Q_{j} \in \mathcal{B}\}$
	& $\mathbb Q^{\mathcal B \times 1}$ is the sequence that the UAV-enabled VAA transmits the collected data to different BSs, while $Q_{j}$ is the $j$th communicating $Q_{j}$ BS.
	& $\mathbb Q^{\mathcal B \times 1} = (2,3, \dots,6)$ means that the UAV-enabled VAA will transmit data to these BSs in accordance with the order of BS2, BS3, ..., BS6.
	\\
	\bottomrule
	\end{tabular}
\end{table*}
\par Based on the above analysis, we formulate a MOOP with optimization objectives summarized as follows. Moreover, Table \ref{table:notations} provides the related information regarding the variable symbols.
\par As such, the decision space consists of two parts which are continuous dimensions and discrete dimensions. Specifically, the excitation current weights $\mathbb I^{\mathcal B \times \mathcal U}$ and the 3D position coordinates $(\mathbb X^{\mathcal B \times \mathcal U}, \mathbb Y^{\mathcal B \times \mathcal U}, \mathbb Z^{\mathcal B \times \mathcal U})$ of UAVs are continuous decision variables. The directivity and strength of the mainlobe can be enhanced by optimizing these continuous variables to achieve efficient and low-interference data transmission to a specific target BS. In addition, since the UAV swarm needs to transmit the collected data to multiple target BSs for data backup and decentralized processing, we need to optimize the communication order between the UAV swarm and the target BSs, which is determined by the discrete decision variables $\mathbb Q^{\mathcal B \times 1}$. By jointly optimizing the discrete variables $\mathbb Q^{\mathcal B \times 1}$ and the continuous variables $\mathbb I^{\mathcal B \times \mathcal U}$ and $(\mathbb X^{\mathcal B \times \mathcal U}, \mathbb Y^{\mathcal B \times \mathcal U}, \mathbb Z^{\mathcal B \times \mathcal U})$, the time cost, total SINR, and energy consumption of the entire system are minimized, thereby improving the overall performance of the considered system.
\par \emph{\textbf{Optimization Objective 1:}} To extend their effective working time, UAVs need to transmit data as quickly as possible. Thus, the first optimization objective is given by:
\begin{equation}
\label{eq:optimization_objective_1}
   \begin{aligned}
     f_1 (\mathbb I^{\mathcal B \times \mathcal U}, \mathbb X^{\mathcal B \times \mathcal U}, \mathbb Y^{\mathcal B \times \mathcal U}, \mathbb Z^{\mathcal B \times \mathcal U}, \mathbb Q^{\mathcal B \times 1}) = \sum_{j =1}^{N_{B}} \frac{Data_{j}}{\mathcal{R}_{BS_{j}}},
   \end{aligned}
\end{equation}
\noindent where $\boldsymbol X = \{\mathbb I^{\mathcal B \times \mathcal U}, \mathbb X^{\mathcal B \times \mathcal U}, \mathbb Y^{\mathcal B \times \mathcal U}, \mathbb Z^{\mathcal B \times \mathcal U}, \mathbb Q^{\mathcal B \times 1}\}$ denotes the solution of the MOOP, and $Data_{j}$ is the total data that the VAA transmits to the $j$th BS.

\par \emph{\textbf{Optimization Objective 2:}} Except for the data transmission from the VAA to the selected BS, other BSs also need to communicate with some GUs, which may be interfered with by the data transmission process. Thus, the total SINR of the interfered BSs is used to describe the interference mitigation effect of the CB-based strategy, in which the larger the total SINR the better the interference mitigation performance, which can be given by
\begin{equation}
  \label{eq:optimization_objective_2}
   \begin{aligned}
     f_2 (\mathbb I^{\mathcal B \times \mathcal U}, \mathbb X^{\mathcal B \times \mathcal U}, \mathbb Y^{\mathcal B \times \mathcal U}, \mathbb Z^{\mathcal B \times \mathcal U}, \mathbb Q^{\mathcal B \times 1}) = \sum_{j=1}^{N_{B}} \sum_{k \in\left\{\mathcal{B} \setminus \{ j \} \right\}} \mathcal{Y}_{B S_{j, k}},
   \end{aligned}
\end{equation}
\noindent where $\mathcal{Y}_{B S_{j, k}}$ represents the SINR of interfered BS $k$ during the communication between the VAA and the $j$th targeted BS.

\par \emph{\textbf{Optimization Objective 3:}} The UAVs will fine-tune their positions to achieve the two above goals, which may result in increased energy consumption. To increase the lifetime of the UAV network, it is essential to reduce the propulsion energy consumption of the UAVs, which is given by
\begin{equation}
  \label{eq:optimization_objective_3}
   \begin{aligned}
     f_3 (\mathbb X^{\mathcal B \times \mathcal U}, \mathbb Y^{\mathcal B \times \mathcal U}, \mathbb Z^{\mathcal B \times \mathcal U}, \mathbb Q^{\mathcal B \times 1}) = \sum_{j=1}^{N_{B}} \sum_{i=1}^{N_{U}} \mathcal{E}_{j,i}(T_{j}),
   \end{aligned}
\end{equation}
\noindent where $T_{j}$ denotes the time spent by the UAV swarm to form a VAA. Moreover, $\mathcal{E}_{j,i}(T_{j}) = \mathcal{P}(V_{j,i}) \times T^{move}_{j,i} + \mathcal{P}(V_{0}) \times (T_{j}-T^{move}_{j,i})$, in which $V_{0}=0$ denotes the UAV is hovering, and $T^{move}_{j,i} = D_{j,i}/V_{j,i}$ means the flight time required for the $i$th UAV for transmitting data to BS $j$.

\par In summary, the MOOP can be expressed as follows:
\begin{subequations}
  \label{MOP-formulation}
  \begin{align}
    \min_{\boldsymbol{X}} \quad &\{f_{1}, -f_{2}, f_{3} \}\\
    \text{s.t.} \quad  & C1: 0 \leqslant  I_{j,i} \leqslant  1, \forall j \in \mathcal{B}, \forall i \in \mathcal{U} \label{eq:const1}\\
    & C2: L_{min} \leqslant x^{U}_{j,i} \leqslant L_{max}, \forall j \in \mathcal{B}, \forall i \in \mathcal{U} \label{eq:const2}\\
    & C3: L_{min} \leqslant y^{U}_{j,i} \leqslant L_{max}, \forall j \in \mathcal{B}, \forall i \in \mathcal{U} \label{eq:const3}\\
    & C4: H_{min} \leqslant z^{U}_{j,i} \leqslant H_{max}, \forall j \in \mathcal{B}, \forall i \in \mathcal{U} \label{eq:const4}\\
    & C5: \mathbb{Q}^{\mathcal B \times 1} \in \mathcal{Q} \label{eq:const5}\\ 
    & C6: D_{(i_1, i_2)} \geq D_{min},  \forall i_1, i_2 \in \mathcal{U} \label{eq:const6}
  \end{align}
\end{subequations}
\noindent where the constraints C1 to C5 define the range of the decision variables, and the constraint C6 is utilized to ensure the minimum distance between any two UAVs to prevent collisions. Specifically, the constraint in (\ref{eq:const1}) limits the excitation current weight \(I_{j,i}\) for each UAV. Moreover, the constraints in (\ref{eq:const2}), (\ref{eq:const3}), and (\ref{eq:const4}) restrict the 3D coordinate of each UAV to a specified range to ensure that the UAV is within the monitoring area and maintains an efficient flight altitude, where $L_{min}$ and $L_{max}$ denote the minimum and maximum ranges of the area that the UAV can fly in the horizontal direction, and $H_{min}$ and $H_{max}$ represent the minimum and maximum flight altitude of the UAV, respectively. Furthermore, the constraint in (\ref{eq:const5}) limits the sequence of data transmission between the VAA and BSs, where $\mathcal {Q} = \{\mathbb{Q}^{\mathcal B \times 1}_{1}, \mathbb{Q}^{\mathcal B \times 1}_{2}, \dots, \mathbb{Q}^{\mathcal B \times 1}_{N_{B}!}\}$ indicates the set of data transmission orders between the UAV-enabled VAA and $N_{B}$ BSs, and there are $N_{B}!$ possible permutation. In addition, the constraint in (\ref{eq:const6}) denotes that any two UAVs must maintain a minimum distance larger than $D_{min}$ to prevent collisions.

\begin{theorem}
The formulated MOOP shown in Eq. (\ref{MOP-formulation}) is NP-hard.
\end{theorem}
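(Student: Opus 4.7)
The plan is to prove NP-hardness by polynomial reduction from the classical Traveling Salesman Problem (TSP). The MOOP contains as a subproblem the selection of a permutation $\mathbb{Q}^{\mathcal{B} \times 1}$ over the BSs that determines the order in which the UAV swarm reconfigures itself; because the per-UAV flight distance $D_{j,i}$ entering the propulsion-energy objective $f_3$ in Eq.~(\ref{eq:optimization_objective_3}) is the distance between successive per-BS VAA configurations (as the excerpt emphasizes, the UAVs must physically adjust their positions between consecutive transmissions), the accumulated energy is a sequence-dependent sum of pairwise transition costs, which is exactly the cost structure of a TSP tour.

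First I would collapse the multi-objective problem to a single-objective one by a weighted scalarization placing all weight on $f_3$, so that any algorithm producing even one Pareto-optimal solution must in particular solve this scalarization. Next, for each BS $j$ I would fix the continuous decision variables $(\mathbb{I}^{\mathcal{B}\times\mathcal{U}}, \mathbb{X}^{\mathcal{B}\times\mathcal{U}}, \mathbb{Y}^{\mathcal{B}\times\mathcal{U}}, \mathbb{Z}^{\mathcal{B}\times\mathcal{U}})$ to canonical values satisfying constraints (\ref{eq:const1})–(\ref{eq:const6}), arranged so that the implied inter-configuration UAV-swarm distances realize an arbitrary prescribed weight matrix over the BSs. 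Placing the BSs far apart and choosing rigid VAA geometries makes this encoding flexible enough to represent any rational-weight TSP instance of size $N_B = |V|$ in polynomial space. Under Eq.~(\ref{eq:energy-3d}), $f_3$ is strictly monotone in the accumulated UAV tour length, so an optimizer of $f_3$ returns an optimal TSP tour and conversely. Since the construction is polynomial in $|V|$, NP-hardness of TSP transfers to the MOOP.

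The main obstacle, in my view, will be justifying that fixing the continuous variables to canonical per-BS values is without loss of generality. A clean reduction requires that every optimum of the scalarized instance coincides with the canonical configuration plus the optimal permutation; otherwise we have only shown hardness of a restricted subproblem. I would address this either by constructing BS geometries for which the per-BS VAA configuration jointly favored by $f_1$ and $f_2$ is essentially unique (forcing the Pareto-optimal solution into the canonical choice), or by appending an auxiliary constraint—expressible through the box bounds in (\ref{eq:const2})–(\ref{eq:const4})—that shrinks each per-BS feasible set around the canonical configuration; either route yields a polynomial-time reduction from TSP to the MOOP and completes the proof.
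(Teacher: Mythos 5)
Your proposal follows essentially the same route as the paper: fix the continuous variables (excitation current weights and UAV positions), observe that the remaining choice of the BS visiting order $\mathbb{Q}^{\mathcal{B}\times 1}$ under the sequence-dependent transition-energy sum in $f_3$ is a TSP, and inherit NP-hardness from TSP. The paper simply asserts this restriction without the scalarization or without-loss-of-generality machinery you add; the one caution is that transition costs built from Euclidean inter-configuration distances cannot realize an \emph{arbitrary} rational weight matrix as you claim, so the reduction should be taken from metric/Euclidean TSP rather than general TSP --- which is still NP-hard, so the conclusion is unaffected.
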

\begin{proof}
We assume that the excitation current weights and positions of the UAVs in $f_3$ are fixed and known, and $f_{3}$ is simplified as follows~\cite{Sun2021a}:
\begin{subequations}
  \label{f_3}
  \begin{align}
    \min_{\mathbb{Q}^{\mathcal B \times 1}} \quad & f_{3} = \sum_{j=1}^{N_{B}-1} E_{Q_{j},Q_{j+1}} \label{eq:simplified_f3}\\
    \text{s.t.} \quad  & \mathbb{Q}^{\mathcal B \times 1} \in \mathcal{Q} \label{eq:const_s1}
  \end{align}
\end{subequations}
\noindent where $E_{Q_{j}, Q_{j+1}}$ is the energy consumed by all UAVs in our considered system to fly in a 2D horizontal plane during the transition from communicating with the $Q_{j}$th BS to communicating with the $Q_{j+1}$th BS.
\par Note that $E$ in Eq. (\ref{eq:simplified_f3}) can be calculated according to Eq. (\ref{eq:energy-2d}). Specifically, we first assume that the UAV moves in the horizontal plane at a constant speed $V_{mr}$, in which $V_{mr}$ refers to the optimal UAV speed that maximizes its total motion distance for any given energy, and it can be calculated using the method described in~\cite{Zeng2019a}. Second, we can further calculate the propulsion energy consumption by using Eq. (\ref{eq:energy-2d}). Finally, $E_{Q_{j},Q_{j+1}}$ can be calculated based on the decision variables $(\mathbb X^{\mathcal B \times \mathcal U}, \mathbb Y^{\mathcal B \times \mathcal U})$ and the propulsion energy consumption calculated above.
\par Based on the above analysis, the simplified $f_{3}$  can be viewed as a traveling salesman problem (TSP). Since TSP is the classic NP-hard problem~\cite{Zeng2018,Reda2022}, the simplified $f_{3}$ is also NP-hard. Thus, the MOOP, as defined in Eq. (\ref{MOP-formulation}), is an NP-hard problem.
\end{proof} 

%
%
\section{The Proposed Method}
\label{sec:the_proposed_method}
\par The conventional NSGA-II has some disadvantages such as poor convergence when solving the formulated MOOP. Therefore, we propose the CNSGA-II algorithm by introducing the chaos theory in this section. In addition, we present a feasible deployment strategy for the proposed CB-based method in practical systems.

\subsection{Traditional NSGA-II}
\label{ssec:traditional_NSGA-II}
\par The NSGA-II has several advantages such as ease of implementation and fast operation. Specifically, NSGA-II draws on the basic ideas of genetic algorithms. It maintains a population $\boldsymbol{P}$ that consists of multiple potential solutions to the problem (\emph{i.e}., $\boldsymbol{X_{1}}, \boldsymbol{X_{2}}, \dots, \boldsymbol{X_{N}}$). During the solving process, the population is first generated and then updated iteratively via the select, crossover, and mutation operators. Fig. \ref{fig:selection_strategy} displays the sketch map of the population update, which is briefly described as follows:
\par \emph{\textbf{First}}, the crossover and mutation operators are adopted to update population $\boldsymbol{P_{t}}$ and generate a set of child individuals $\boldsymbol{Q_{t}}$. Note that crossover and mutation are essential operations to achieve population updates in NSGA-II. Specifically, the crossover operation generates new individuals by replacing and recombining part of the structure of two individuals. This operation can increase population diversity and is the core operation of the NSGA-II. On the other hand, mutation operation refers to updating and replacing part of the structure of an individual to form a new individual. Unlike the crossover operation, which must be used in populations with two or more individuals, the mutation operation can be performed on a single individual. A reasonable and effective crossover and mutation strategies can significantly improve the solving performance of evolutionary computation methods.
\par \emph{\textbf{Second}}, the non-dominated sorting strategy is utilized to hierarchically sort these individuals in $\boldsymbol{P_{t}}$ and $\boldsymbol{Q_{t}}$, ensuring that each individual can be correctly classified into its corresponding Pareto level (such as $F_{1}$ and $F_{2}$ shown in Fig. \ref{fig:selection_strategy}, in which the solutions at level $F_{1}$ are dominant to the solutions at level $F_{2}$). 
\par \emph{\textbf{Finally}}, for solutions at the same Pareto level, we further sort them according to their crowding distance. This is because the crowding distance can measure the distribution density of solutions in the Pareto front (PF). Specifically, a larger crowding distance means that the solutions are sparse in the front, which helps to maintain the diversity of the population. The excellent $N$ solutions are selected from the sorted individuals by non-dominated sorting and crowding distance sorting to form a new population $\boldsymbol{P_{t+1}}$.
\begin{figure}
	\centering
	\includegraphics[width=3.5in]{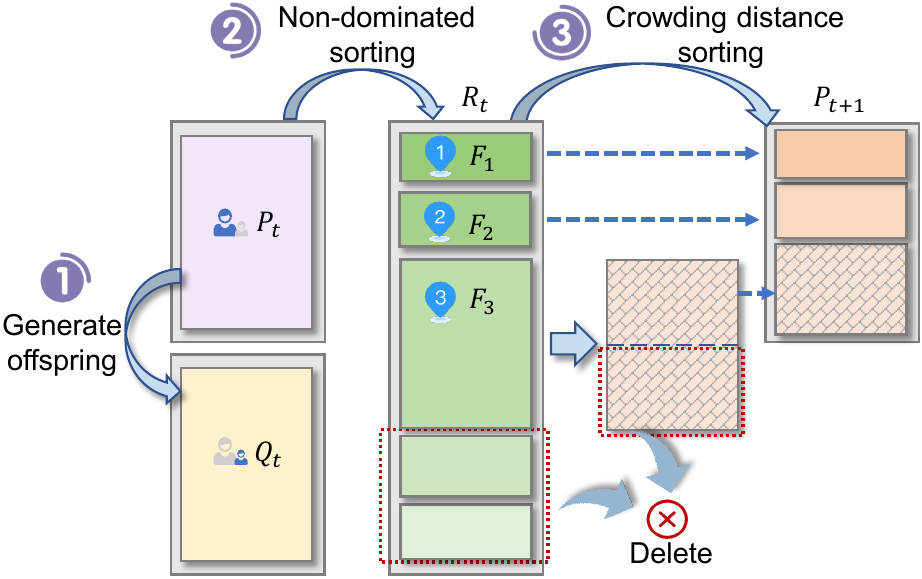}
	\caption{Sketch map of the population update.}
	\label{fig:selection_strategy}
\end{figure}
\begin{algorithm}[]
\caption{CNSGA-II}\label{Algorithm:CNSGA-II}
  \KwIn{population $\boldsymbol{P_{0}}$, population size $N$ and the maximum iteration number $T$, etc.;}
  \KwOut{population $\boldsymbol{P_{T}}$;}
  \tcc{Population initialization stage}
  \For{$n=1$ to $N$}
	{
		Initialize $\boldsymbol X_{n}$ by using Eqs. (\ref{eq:Gauss_map}) and (\ref{eq:initialization})\tcp*{The first improved factor}
        $\boldsymbol{P_{0}} \leftarrow \boldsymbol X_{n}$;\\
	}
    Calculate the objective values for each solution in $\boldsymbol{P_{0}}$;\\
	\For{$t=1$ to $T$}
	{
        \tcc{Solution update stage}
		Select the partial solutions in $\boldsymbol{P_{t}}$ to form parent individuals $\boldsymbol{Parent}$\tcp*{Tournament selection operator}
		Update the solution in $\boldsymbol{Parent}$ by using Algorithm~\ref{Algorithm:Hybrid_Solution_Crossover_Strategy} and Algorithm~\ref{Algorithm:Hybrid_Solution_Mutation_Strategy} to generate the new individuals, denoted as $\boldsymbol{Child}$\tcp*{The second and third improved factors}
        Calculate the objective values for each solution in $\boldsymbol{Child}$;\\
        \tcc{Solution prioritization stage}
		Merge all individuals in $\boldsymbol{Parent}$ and $\boldsymbol{Child}$ to get the merged population $\boldsymbol{R}$;\\
		Delete some solutions by using Algorithm~\ref{Algorithm:elimination_strategy}{\tcp*{The fourth improved factor}}
        Select the optimal $N$ solutions from $\boldsymbol{R}$ as $\boldsymbol{P_{t+1}}$\tcp*{Non-dominated sorting strategy}
	}
	Return $\boldsymbol{P_{T}}$;
\end{algorithm}
\subsection{CNSGA-II}
\label{ssec:CNSGA-II}
\par Given that the formulated problem has a mixed and complex solution space composed of discrete and continuous dimensions, some existing proposed strategies, such as~\cite{Li2024} and~\cite{Li2022a}, are no longer applicable. Thus, we propose the CNSGA-II by introducing the chaotic solution initialization operator, chaos-based hybrid solution crossover and mutation strategies, and improved strategy based on an elimination mechanism to enhance the NSGA-II algorithm, and Algorithm \ref{Algorithm:CNSGA-II} shows the pseudocode of CNASGA-II. Moreover, Fig.~\ref{fig:Framework_CNSGA-II} shows the algorithm framework of CNSGA-II, and the main steps are as follows.
\par \textbf{Step 1. Population Initialization:} CNSGA-II generates the initial population consisting of multiple solutions via our proposed chaotic solution initialization strategy, in which each solution is the potential solution of our formulated MOOP.
\par \textbf{Step 2. Objective Calculation:} The values of three optimization objectives for each solution need to be calculated.
\par \textbf{Step 3. Solution Update:} The algorithm first selects some solutions from the parent population through a tournament selection operator and then applies the proposed chaos-based hybrid solution crossover and mutation operators to update these solutions for generating a child population.
\par \textbf{Step 4. Solution Prioritization:} CNSGA-II merges parent and child populations and employs our proposed elimination strategy and the fast non-dominated sorting method to select the best $N$ solutions for the next generation, denoted as $P_{k+1}$.
\par \textbf{Step 5. Termination Check:} If the algorithm satisfies the predefined termination condition (such as reaching the maximum iteration number), the values in $P_{k+1}$ are determined as the final solutions. Otherwise, the process returns to Step 3 for further iterations.
\par The details of the proposed improved strategies are as follows.
\begin{figure*}
	\centering
	\includegraphics[width=7in]{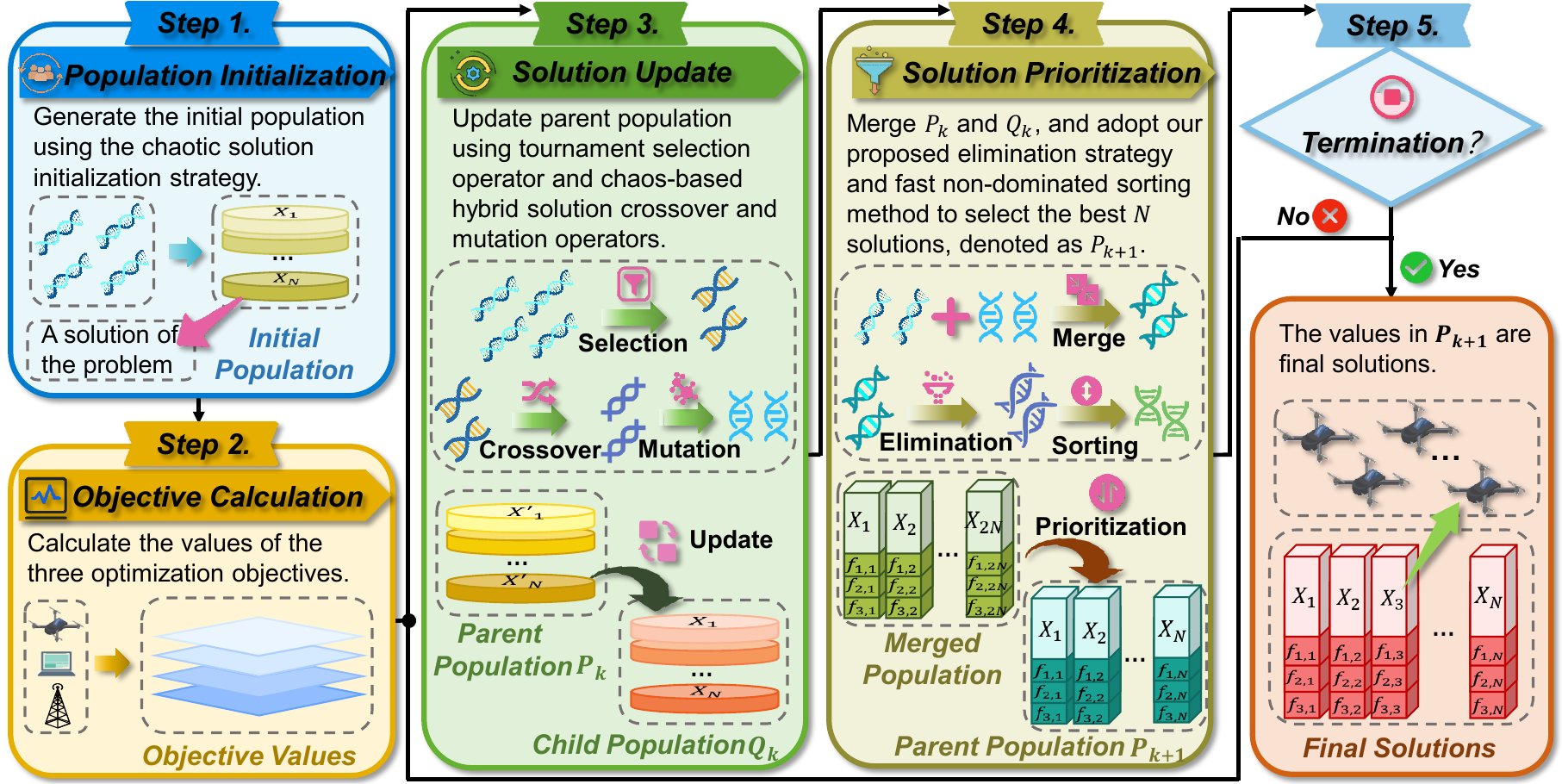}
	\caption{The algorithm framework of CNSGA-II.}
	\label{fig:Framework_CNSGA-II}
\end{figure*}

\par In this section, CNSGA-II introduces chaotic methods in solution initialization, crossover and mutation operations to enhance the solving ability of NSGA-II. Moreover, CNSGA-II uses an improved strategy based on an elimination mechanism to exclude non-competitive solutions to avoid the over-exploration of the search space by the algorithm.

\subsubsection{Chaotic Solution Initialization Operator}
\label{sssec:the_first_operator}
\par The solution initialization is an important step in evolutionary computation methods since it can determine the initial search direction of the algorithm. However, conventional NSGA-II generates the initial solutions in a random way. In this case, the conventional NSGA-II may fall into a local optimum when dealing with the huge solution space of the formulated MMOP. Accordingly, it is essential to enhance the solution initialization procedure of the NSGA-II.
\par Accordingly, chaos theory is an effective approach for improving the quality of the initial solutions~\cite{Li2021}. By mapping the variables in the search space to the chaotic domain, chaotic methods can obtain well-distributed initial solutions. As a result, the Gauss/mouse chaotic map is adopted to optimize the initial population, which can be expressed as follows~\cite{Ewees2020}:
\begin{equation}
  \label{eq:Gauss_map}
  \begin{aligned}
	  {G}_{k+1}= \left\{\begin{array}{ll}
	  1 & {G}_{k} = 0 \\
	  \frac{1}{mod({G}_{k}, 1)} &  \text { otherwise } \\
	  \end{array}\right.,
  \end{aligned}
\end{equation}
\noindent where $G_{k}$ is the $k$th element of the chaotic array generated by the Gauss/mouse map, and the chaotic solution initialization operator is given by
\begin{equation}
  \label{eq:initialization}
   \begin{aligned}
     X^m_{n} = lb^{m} + G_{k} \times (ub^{m} - lb^{m}),
   \end{aligned}
\end{equation}
\noindent where $X^m_{n}$ indicates the $m$th dimension of the $n$th solution, and $ub^{m}$ and $lb^{m}$ denote the upper and lower bounds of the $m$th dimension of a solution, respectively.

\subsubsection{Chaos-based Hybrid Solution Crossover Strategy}
\label{sssec:the_second_operator}
\par The crossover process is the core operation of evolutionary computation methods since it can achieve the update of the solutions~\cite{Li2023a}. However, the crossover operator of traditional NSGA-II may be inapplicable to the formulated problem, resulting in premature convergence of the algorithm. Moreover, this crossover operator cannot achieve the update of the discrete dimensions. Thus, we need to make some improvements in the crossover process of NSGA-II.
\par \textbf{For the continuous part}, the crossover operation of the traditional NSGA-II can be used directly to update continuous dimensions. However, NSGA-II may fall into a local optimum when solving the formulated MOOP with a large-scale optimization space. Accordingly, the crossover operation needs to be improved to enhance the exploration ability of the algorithm. Specifically, based on the simulated binary crossover method, we use the elements of the chaotic sequence generated by the Logistic map as a threshold to determine the generation way of the value $\beta$. The chaos-based crossover strategy can be expressed as follows:
\begin{equation}
  \label{eq:SBX}
   \begin{aligned}
     \left\{\begin{array}{l}C_{t, m}^{1}=0.5 \times \left[\left(1+\beta_{t, m}\right) \cdot P_{t, m}^{1}+(1-\beta_{t, m}) \cdot P_{t, m}^{2}\right] \\C_{t, m}^{2}=0.5 \times \left[\left(1-\beta_{t, m}\right) \cdot P_{t, m}^{1}+(1+\beta_{t, m}) \cdot P_{t, m}^{2}\right]\end{array}\right.,
   \end{aligned}
\end{equation}
\noindent where $t$ is iteration number, and $m$ is the index of a solution dimension. Moreover, $\boldsymbol{P^1} = (p^1_{1}, p^1_{2}, \dots, p^1_{D})$ and $\boldsymbol{P^2} = (p^2_{1}, p^2_{2}, \dots, p^2_{D})$ are the two parent individuals, and $\boldsymbol{C^1} = (c^1_{1}, c^1_{2}, \dots, c^1_{D})$ and $\boldsymbol{C^2} = (c^2_{1}, c^2_{2}, \dots, c^2_{D})$ indicate the two child individuals, in which $D$ denotes the solution dimension size. In addition, $\beta_{t, m}$ is determined dynamically and randomly by the distribution factor $\eta > 0$, which is expressed by
\begin{equation}
  \label{eq:beta}
   \begin{aligned}
     \beta_{t, m}=\left\{\begin{array}{ll}\left(2 u_{t, m}\right)^{\frac{1}{\eta+1}}, & u_{t, m} \leq L_{t} \\\left(\frac{1}{2\left(1-u_{t, m}\right)}\right)^{\frac{1}{\eta+1}}, & \text { otherwise }\end{array}\right.
   \end{aligned}
\end{equation}
\noindent where $L_{t} \in (0,1)$ is the $t$th element of the Logistic chaotic array, and $u_{t, m} \in (0,1)$ is random value.
\par \textbf{For the discrete part}, $\mathbb{Q}^{B \times 1}$ indicates the communication order between the VAA and different BSs. As shown in Eq. (\ref{f_3}), the third objective of the MOOP can be regarded as the TSP, which means that a simple crossover operation can lead to repetition and vacancy of the communications between the VAA and multiple BSs. Thus, a partially mapped crossover (PMX) strategy is adopted to achieve the update of the discrete dimensions~\cite{Ting2010}.
\par Algorithm \ref{Algorithm:Hybrid_Solution_Crossover_Strategy} displays the primary steps of the chaos-based hybrid solution crossover strategy.
\begin{algorithm}[]
\caption{Chaos-based Hybrid Solution Crossover Strategy}\label{Algorithm:Hybrid_Solution_Crossover_Strategy}
  \KwIn{$X_{1}$, $X_{2}$;}
  \KwOut{$X_{1}^{'}$, $X_{2}^{'}$;}
    Cross $X_{1}$ and $X_{2}$ to generate $X_{1}^{c}$ and $X_{2}^{c}$ using Eqs. (\ref{eq:SBX}) and (\ref{eq:beta})\tcp*{Crossover on continuous dimensions}
    Cross $X_{1}$ and $X_{2}$ to generate $X_{1}^{d}$ and $X_{2}^{d}$ using PMX strategy\tcp*{Crossover on discrete dimensions}
	$X_{1}^{'}$ = [$X_{1}^{c}, X_{1}^{d}$];\\
	$X_{2}^{'}$ = [$X_{2}^{c}, X_{2}^{d}$];\\
    Return $X_{1}^{'}$, $X_{2}^{'}$;
\end{algorithm}

\subsubsection{Chaos-based Hybrid Solution Mutation Strategy}
\label{sssec:the_third_operator}
\par Another important step of evolutionary computation algorithms is the mutation process, which is used to keep the diversity of the population~\cite{Rauf2023}. However, traditional NSGA-II is proposed for solving the continuous optimization problem, in which the mutation operator is not suitable for dealing with the formulated MOOP with hybrid solution space. Therefore, we use different mutation strategies to update the continuous and discrete dimensions of the solutions.
\par \textbf{For the continuous part}, similar to the crossover process, we adopt the chaotic mapping to optimize the mutation strategy of the traditional NSGA-II. Specifically, different strategies are selected to achieve the mutation of solutions according to the value $\Delta_{t, j}$ produced by the Chebyshev chaotic array. The chaos-based mutation strategy is given by
\begin{equation}
  \label{eq:PM}
   \begin{aligned}
     C^1_{t,m} = P^1_{t,m} + \Delta_{t, m},
   \end{aligned}
\end{equation}
\noindent where $\Delta_{t, m}$ is calculated as follows:
\begin{equation}
  \label{eq:Delta}
   \begin{aligned}
     \Delta_{t, m}=\left\{\begin{array}{ll}\left(2 u_{t,m}\right)^{\frac{1}{\eta_{u}+1}}-1, & u_{t,m}<C_{t} \\
     1-\left[2\left(1-u_{t,m}\right)\right]^{\frac{1}{1+\eta_{u}}}, & \text { otherwise } \end{array}\right.,
   \end{aligned}
\end{equation}
\noindent where $C_{t}$ is the $t$th element of the Chebyshev chaotic sequence, and $\eta_{u}$ is a user-defined arbitrary non-negative real number.
\par \textbf{For the discrete part}, the exchange mutation (EM) strategy is adopted~\cite{Alipour2018}, and Algorithm \ref{Algorithm:Hybrid_Solution_Mutation_Strategy} displays the primary steps of the chaos-based hybrid solution mutation strategy.
\begin{algorithm}[]
\caption{Chaos-based Hybrid Solution Mutation Strategy}\label{Algorithm:Hybrid_Solution_Mutation_Strategy}
    \KwIn{$X_{1}$;}
  \KwOut{$X_{1}^{'}$;}
    Mutate $X_{1}$ to generate $X_{1}^{c}$ using Eqs. (\ref{eq:PM}) and (\ref{eq:Delta})\tcp*{Mutation on continuous dimensions}
    Mutate $X_{1}$ to generate $X_{1}^{d}$ using EM strategy\tcp*{Mutation on discrete dimensions}
	$X_{1}^{'}$ = [$X_{1}^{c}, X_{1}^{d}$];\\
    Return $X_{1}^{'}$;
\end{algorithm}

\subsubsection{Improved Strategy Based on an Elimination Mechanism}
\label{sssec:the_fourth_operator}
\par NSGA-II may produce some non-competitive solutions while dealing with the formulated MOOP, and these solutions may result in a decrease in the convergence accuracy of the algorithm. Thus, we propose an improved strategy based on an elimination mechanism in this work, and Algorithm \ref{Algorithm:elimination_strategy} displays the pseudocode of the proposed improved operators based on an elimination mechanism.
\begin{algorithm}[]
\caption{The Improved Operator Based on an Elimination Mechanism}\label{Algorithm:elimination_strategy}
  \KwIn{current population $\boldsymbol{P}$;}
  \KwOut{updated population $\boldsymbol{P_{new}}$;}
    Rank $\boldsymbol{P}$ according to the value of $f_{1}$ of solutions;\\
    Remove the worst $\tau_1$ solutions from $\boldsymbol{P}$, and get the new population $\boldsymbol{P_{t}}$;\\
    Rank $\boldsymbol{P_{t}}$ according to the value of $f_{2}$ of solutions;\\
    Remove the worst $\tau_2$ solutions from $\boldsymbol{P_{t}}$, and get the new population $\boldsymbol{P_{new}}$;\\
    Return $\boldsymbol{P_{new}}$;
\end{algorithm}

\subsubsection{Complexity Analysis}
\label{sssec:complexity_analysis}
\begin{theorem}
The complexity of the CNSGA-II is $\mathcal O(N_{obj} \times N^2)$.
\end{theorem}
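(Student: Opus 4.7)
The plan is to bound the per-iteration cost of each operator in Algorithm~\ref{Algorithm:CNSGA-II} and then aggregate, showing that the non-dominated sorting step dominates all others asymptotically. I will treat $N$ as the population size and $N_{obj}$ as the number of optimization objectives (here $N_{obj}=3$, but kept symbolic), and let $D$ denote the total solution dimension.

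First, I would handle the population initialization. The chaotic initialization in Eqs.~(\ref{eq:Gauss_map}) and (\ref{eq:initialization}) iterates the Gauss/mouse map once per dimension and once per individual, giving $\mathcal{O}(N\cdot D)$; since this is paid only once (outside the main loop), it will not affect the per-generation bound. Evaluating the three objectives on $N$ individuals costs $\mathcal{O}(N\cdot N_{obj})$, which is again dominated by later steps.

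Next I would analyze the update stage. Tournament selection is $\mathcal{O}(N)$. For the hybrid crossover in Algorithm~\ref{Algorithm:Hybrid_Solution_Crossover_Strategy}, the chaos-based simulated binary crossover on continuous dimensions and the PMX crossover on the discrete permutation dimension both scale linearly in $D$ per pair, giving $\mathcal{O}(N\cdot D)$ for all pairs; the same bound applies to the chaos-based mutation and EM exchange in Algorithm~\ref{Algorithm:Hybrid_Solution_Mutation_Strategy}. Re-evaluating objectives on the child population adds $\mathcal{O}(N\cdot N_{obj})$. Under the standard assumption that $D$ is polynomial in the instance size but subsumed by the sorting cost (or treated as a constant for the complexity statement), these terms stay below the $\mathcal{O}(N_{obj}\cdot N^2)$ target.

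Then I would turn to the prioritization stage, which is the main obstacle and the source of the dominant term. The elimination mechanism of Algorithm~\ref{Algorithm:elimination_strategy} performs two single-objective sorts of at most $2N$ solutions, contributing $\mathcal{O}(N\log N)$. The fast non-dominated sorting of the merged parent-plus-child population of size $2N$ is the classic bottleneck of NSGA-II: comparing every pair of solutions across all $N_{obj}$ objectives requires $\mathcal{O}(N_{obj}\cdot N^2)$ time, and the crowding-distance computation within fronts adds only $\mathcal{O}(N_{obj}\cdot N\log N)$. Summing the per-generation costs yields
\begin{equation}
\mathcal{O}\bigl(N\cdot D\bigr)+\mathcal{O}\bigl(N\cdot N_{obj}\bigr)+\mathcal{O}\bigl(N\log N\bigr)+\mathcal{O}\bigl(N_{obj}\cdot N^2\bigr),
\end{equation}
and the last term absorbs the others. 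Since every generation has the same asymptotic cost, the per-iteration complexity of CNSGA-II is $\mathcal{O}(N_{obj}\cdot N^2)$, which establishes the claim.
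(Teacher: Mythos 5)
Your proposal is correct and follows essentially the same route as the paper: identify objective evaluation at $\mathcal{O}(N_{obj}\times N)$, fast non-dominated sorting at $\mathcal{O}(N_{obj}\times N^{2})$, and crowding distance at $\mathcal{O}(N_{obj}\times N\log N)$, then conclude that the sorting term dominates. You are in fact slightly more thorough than the paper, which silently omits the initialization, crossover/mutation, and elimination costs that you explicitly bound (modulo your implicit assumption that the dimension $D$ is absorbed, which the paper also makes tacitly).
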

\begin{proof}
The computational cost of the CNSGA-II is determined by three main components, \emph{i.e}., the computations about objective function, non-dominated sorting, and crowding distance. Assuming the population capacity is $N$ and the number of objectives is $N_{obj}$. The computational complexity of the objective functions is $\mathcal O(N_{obj} \times N)$, and the complexity cost produced during non-dominated sorting of the population is $\mathcal O(N_{obj} \times N^2)$. Moreover, all solutions on the same front need to be judged again according to their crowding distances to maintain population diversity. In this case, the computational complexity of crowding distance is $\mathcal O(N_{obj} \times N \times logN)$. Therefore, the complexity of the CNSGA-II is $\mathcal O(N_{obj} \times N^2)$.
\end{proof}

\subsection{Deployment Strategy}
\label{ssec:Optimization_Strategy}
\par In the considered practical system, we select a UAV to perform and coordinate the entire optimization process to achieve data synchronization and CSI information acquisition. Fig. \ref{fig:optimization_process} shows the optimization process including three main steps, and the details are introduced as follows.
\begin{figure}
	\centering
	\includegraphics[width=3.5in]{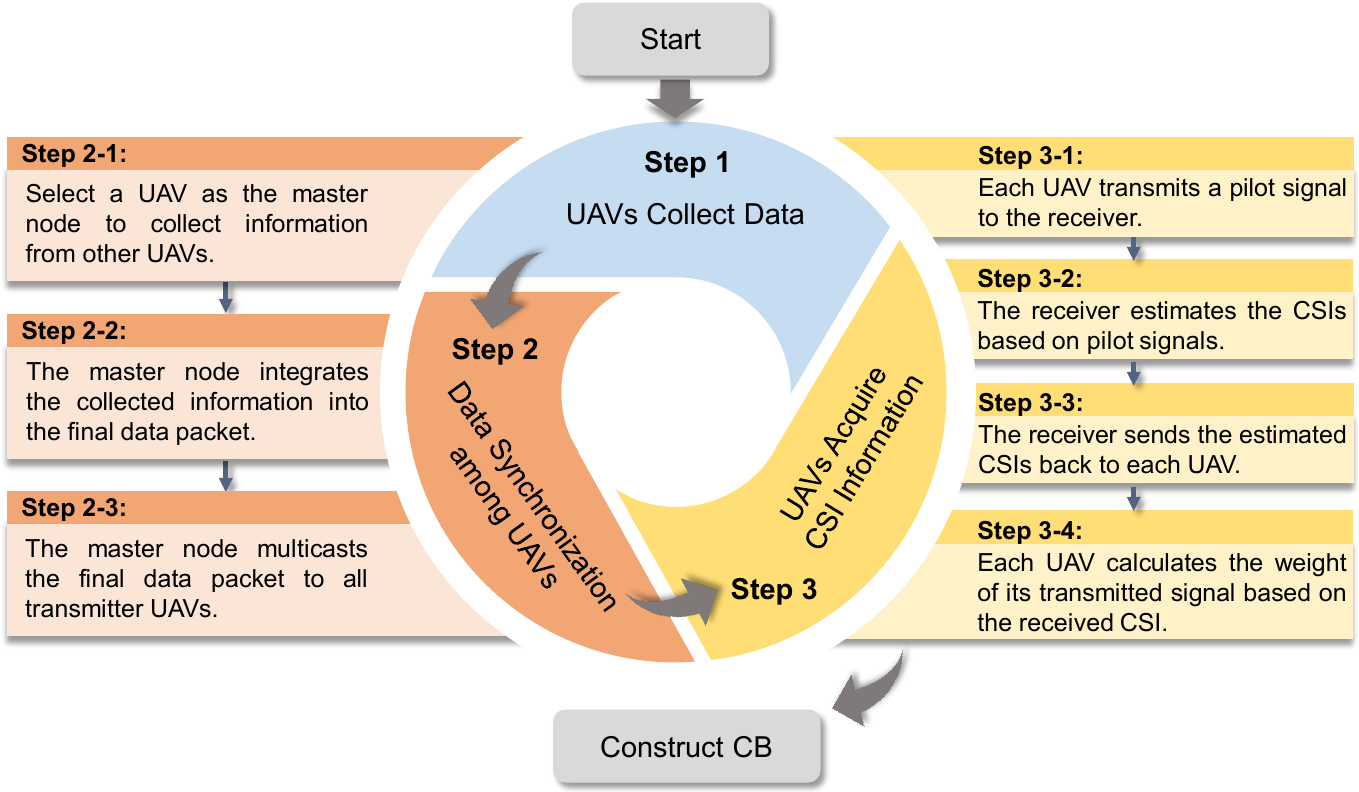}
	\caption{The main steps of deploying our method in practical systems.}
	\label{fig:optimization_process}
\end{figure}
\par \emph{1) UAVs Collect Data:} A group of UAVs are randomly deployed over the monitoring area to collect various environmental data monitored by ground sensors, such as soil moisture and air quality. Note that these UAVs are distributed at different altitudes and positions to ensure comprehensive coverage of the entire monitoring area.
\par \emph{2) Data Synchronization among UAVs:} The UAVs adopt the methods proposed in~\cite{Feng2013} and~\cite{Feng2010} to achieve data sharing. First, a UAV is selected as the master node to collect the information obtained by other UAVs. Second, the master node integrates all data into a final data packet. Finally, the master node multicasts the final data packet to all UAVs. Note that the communication overhead for the data collection and sharing is about 10-20 seconds~\cite{Li2024}, which is negligible compared with the data uplink transmission time.
\par \emph{3) UAVs Acquire CSI Information:} In this step, the UAVs obtain the CSI quantization results using the method proposed in~\cite{Ahmad2022}. First, each UAV sends a known pilot signal to the receiver. Second, the receiver uses the pilot signal to estimate the channel state from each UAV to the receiver, including amplitude, phase, and channel gain information. Third, the receiver sends the estimated CSI back to the UAV through the feedback link. Note that to reduce feedback overhead, CSI is usually quantized before feedback. Finally, the UAV calculates the weight of the transmitted signal based on the received CSI and adjusts the phase of the signal to achieve signal superposition at the receiver.

%
%
\section{Simulation Results and Analysis}
\label{sec:simulation_results_and_analysis}
\par In this section, simulations are conducted on the Matlab platform to assess the performance of CNSGA-II in solving the MOOP shown in Eq. (\ref{MOP-formulation}). Moreover, we also verified the effectiveness of the CB-based interference mitigation approach.
\subsection{Simulation Setups}
\label{ssec:simulation_setups}
\par In our considered simulation scenario, the UAV swarm forms the VAA for transmitting the data collected from the monitoring area (100 m $\times$ 100 m) to multiple ground BSs in sequence. According to the literature~\cite{Sun2021a}, the number of BSs is set to 8, and the number of UAVs is set to 8 and 16. In addition, the flight altitude of UAVs is set between 75 and 95 meters. The reason is that when the UAV flies at this altitude, a more realistic probabilistic LoS channel environment between it and the ground BS can be simulated. Particularly, Table \ref{table:parameters} displays the key parameters in our simulations.
\begin{table}
	\caption{Parameter settings in UAV-enabled data collection network.} \label{table:parameters}
	\begin{tabular}{p{6.3cm} p{1.6cm}}
	\hline
	\bf{Parameter} &\bf{Value} \\
	\hline
	The number of UAVs ($N_{U}$)~\cite{Sun2021a}			& 8, 16\\
	The number of BSs ($N_{B}$)~\cite{Sun2021a}			& 8\\
    The number of GUs 			& 8\\
	Transmission power of the UAV ($P_{U}$)~\cite{Sun2021a}			& 0.1 W\\
    Pathloss exponent ($\alpha$)~\cite{Rappaport2024}			& 3\\
    Transmission power of the GU ($P_{G}$)~\cite{Ho2021}			& 0.1 W\\
	The minimum flight altitude of UAV ($H_{min}$)~\cite{Li2023}			& 75 m\\
	The maximum flight altitude of UAV ($H_{max}$)~\cite{Li2023}			& 95 m\\
	The longest flight distance of UAV ($L_{max}$)~\cite{Li2023}		& 100 m\\
    The weight of UAV ($m_{U}$)~\cite{Sun2021a}        & 2 kg\\
    Blade profile power during UAV hover ($P_1$)~\cite{Zeng2019a} & 79.8563 W \\   
    Blade induced power during UAV hover ($P_2$)~\cite{Zeng2019a} & 88.6279 W \\
    The tip speed of the rotor blade ($v_{tip}$)~\cite{Zeng2019a} & 120 $m/s$ \\
    Average rotor-induced speed during hover ($v_{0}$)~\cite{Zeng2019a} & 4.03 $m/s$ \\
    Fuselage drag ratio ($d_0$)~\cite{Zeng2019a} & 1.225 $kg/m^3$ \\
    Rotor solidity ($s$)~\cite{Zeng2019a} & 0.6 \\
    Rotor disc area ($A$)~\cite{Zeng2019a} & 0.05 \\
    Air density ($\rho$)~\cite{Zeng2019a} & 0.503 $m^2$ \\
	\hline
	\end{tabular}
\end{table}
\par On the other hand, we introduce these evolutionary computation methods for comparison which are multi-objective particle swarm optimization (MOPSO)~\cite{Coello2004}, multi-objective multi-verse optimizer (MOMVO)~\cite{Mrjalili2017}, multi-objective dragonfly algorithm (MODA)~\cite{Mirjalili2016} and traditional NSGA-II~\cite{Deb2002}. Note that some machine learning methods, such as DRL, are not suitable for solving this problem. This is because there may be certain scenario changes (\emph{e.g}., changes in the number of UAVs) in the considered system, which makes the DRL method incur a huge training overhead and difficult to transfer between different scenarios. These comparison algorithms are briefly described as follows:
\begin{itemize}
  \item \emph{\textbf{MOPSO:}} The algorithm adopts a swarm of particles that explores and exploits the search space by updating their velocities and positions under the guise of personal and global best solutions, aiming to find a diverse set of Pareto solutions.
  \item \emph{\textbf{MOMVO:}} The algorithm is inspired by cosmological concepts such as white holes, black holes, and wormholes. It simulates the movement of candidate solutions (universes) through the search space, facilitating exploration and exploitation to identify optimal solutions across multiple objectives.
  \item \emph{\textbf{MODA:}} The algorithm emulates the static and dynamic swarming behaviors of dragonflies, utilizing attraction, alignment, and repulsion mechanisms to search the solution space and converge on a set of optimal trade-off solutions.
  \item \emph{\textbf{NSGA-II:}} The algorithm employs a fast non-dominated sorting approach and a crowding distance mechanism to maintain solution diversity, efficiently guiding the population toward the Pareto-optimal front.
\end{itemize}

\subsection{Results and Analysis}
\label{ssec:results_and_analysis}
\par This work considers two different scale drone-enabled data collection networks consisting of 8 and 16 drones and conducts further analysis and summaries. Through simulations in different scales, the adaptability and flexibility of the proposed method are verified. Moreover, in the larger-scale network, the interference between drones and non-target BSs may increase. Accordingly, simulations in the larger-scale drone-enabled network can be conducted to verify the effectiveness of the CB-based method in complex interference environments.
\subsubsection{The Smaller-scale UAV-enabled Data Collection Network with 8 UAVs}
\par Fig. \ref{fig:results_8UAVs} shows the optimization results obtained by various algorithms in the smaller-scale UAV-enabled data collection network, in which the SINR values provided are cumulative across all interfered BSs to reflect the total interference reduction. It can be observed that the interfered BSs can achieve sufficient SINR for communication, which indicates that the interference caused by the UAV network is effectively mitigated. The reason is that the proposed method enhances beamforming performance by jointly optimizing the excitation current weights and hover positions of UAVs, thus reducing the transmission time and interference to terrestrial communications. Moreover, the proposed CNSGA-II outperforms other classical evolutionary computation methods on three optimization objectives, making it clear that the CNSGA-II is more suited to addressing the MOOP. The reason may be that the proposed improved operators enhance the adaptability of the CNSGA-II, thus enabling the algorithm to achieve an effective balance between exploring potential solution space and exploiting known advantageous solution space. However, compared with the conventional NSGA-II algorithm, the CNSGA-II will incur more computational overhead due to the introduced improved strategies.
\begin{figure}
	\centering
	\includegraphics[width=3.5in]{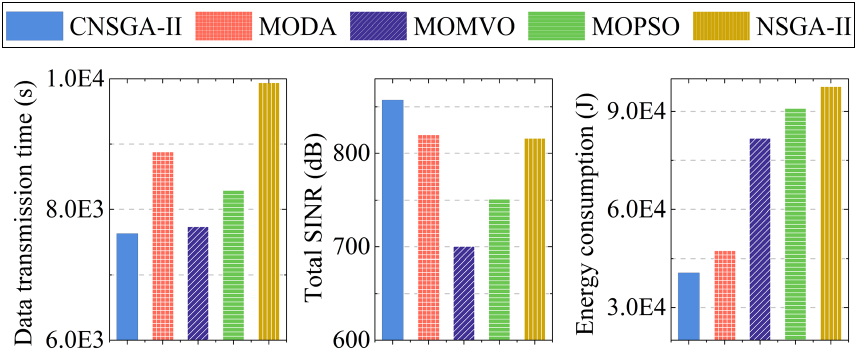}
	\caption{Optimization results obtained by various algorithms (8 UAVs).}
	\label{fig:results_8UAVs}
\end{figure}
\par Fig. \ref{fig:solution_distributions_8UAVs} displays the solution distributions attained by several algorithms. Furthermore, while some classic evolutionary computation methods, such as NSGA-II and MOPSO, can effectively mitigate interference, significantly reducing the transmission and energy efficiency of the UAV-enabled data collection and A2G transmission system, which is not applicable to the scenario considered in this work. Moreover, these solution distributions indicate that the balance among objectives in CNSGA-II is critical and needs to be considered carefully.
\begin{figure}
	\centering
	\includegraphics[width=3.5in]{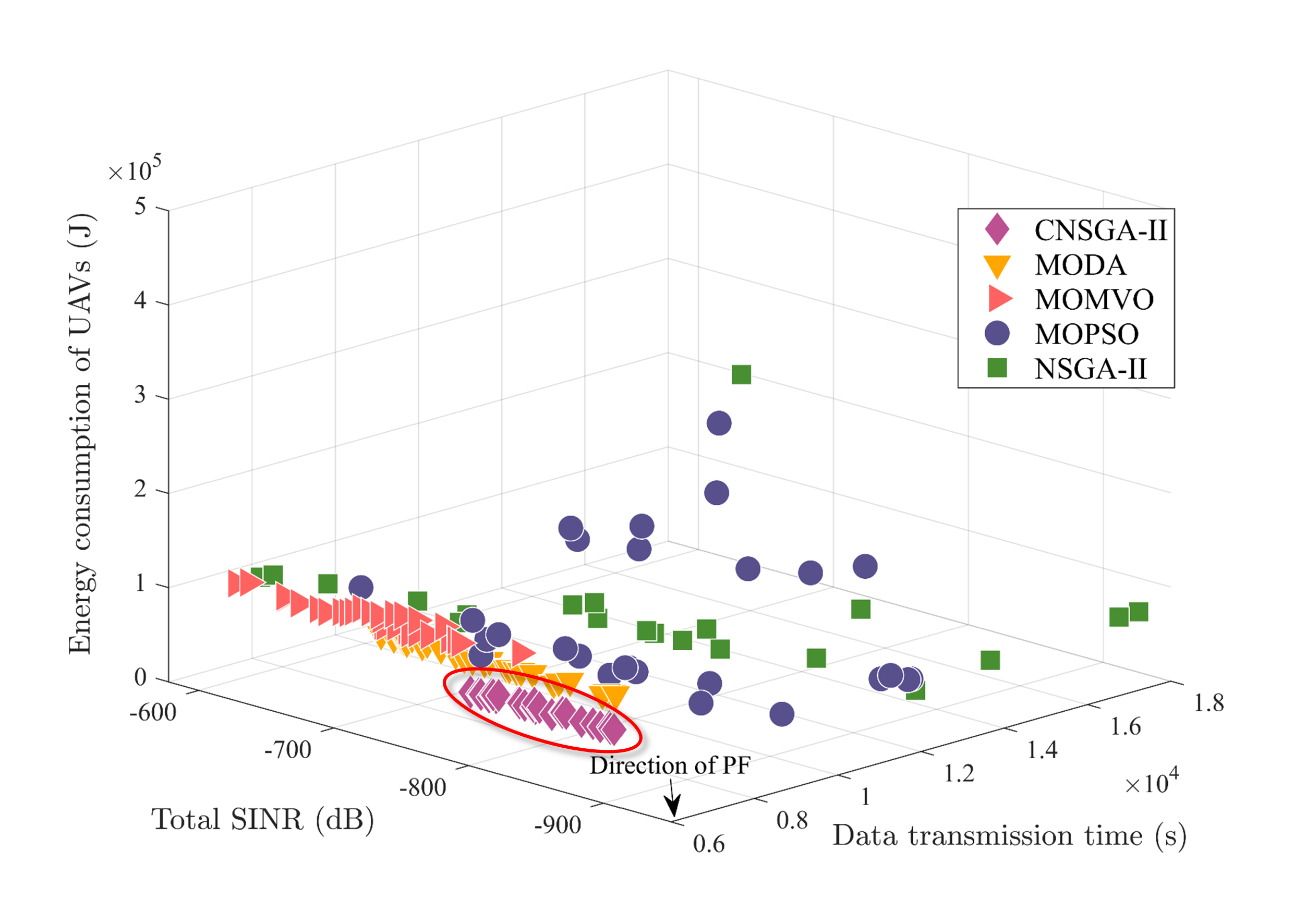}
	\caption{Solution distributions of various algorithms (8 UAVs).}
	\label{fig:solution_distributions_8UAVs}
\end{figure}
\par Moreover, as seen in the figure, the Pareto solutions provided by CNSGA-II are closer to the PF direction, indicating that CNSGA-II has a better solving ability for the MOOP. It can be because the chaos-based hybrid solution crossover and mutation strategies introduced by CNSGA-II can balance the global and local search ability of the algorithm, which leads to achieving a more comprehensive search of the solution space.
\par In addition, the solutions obtained by CNSGA-II are evenly distributed and perform well on all objectives, which means that it achieves a reasonable and efficient exploration of the search space. This is attributed to the chaos-based initialization operator improving the distribution performance and diversity of initial solutions, thus enhancing the initial search performance of traditional NSGA-II. Meanwhile, the improved operator based on an elimination mechanism further improves the capability of the algorithm to perform local searches, preventing the algorithm from failing to converge.
\par Fig. \ref{fig:Tra_8UAVs} gives the flight paths of UAVs obtained by various algorithms in the smaller-scale case for transmitting the collected data with the first BS. It can be seen that the optimized positions obtained by CNSGA-II are closer to the center of the initial positions of UAVs, which indicates that UAVs can generate less propulsion energy consumption.
\begin{figure*}
	\centering
	\includegraphics[width=7in]{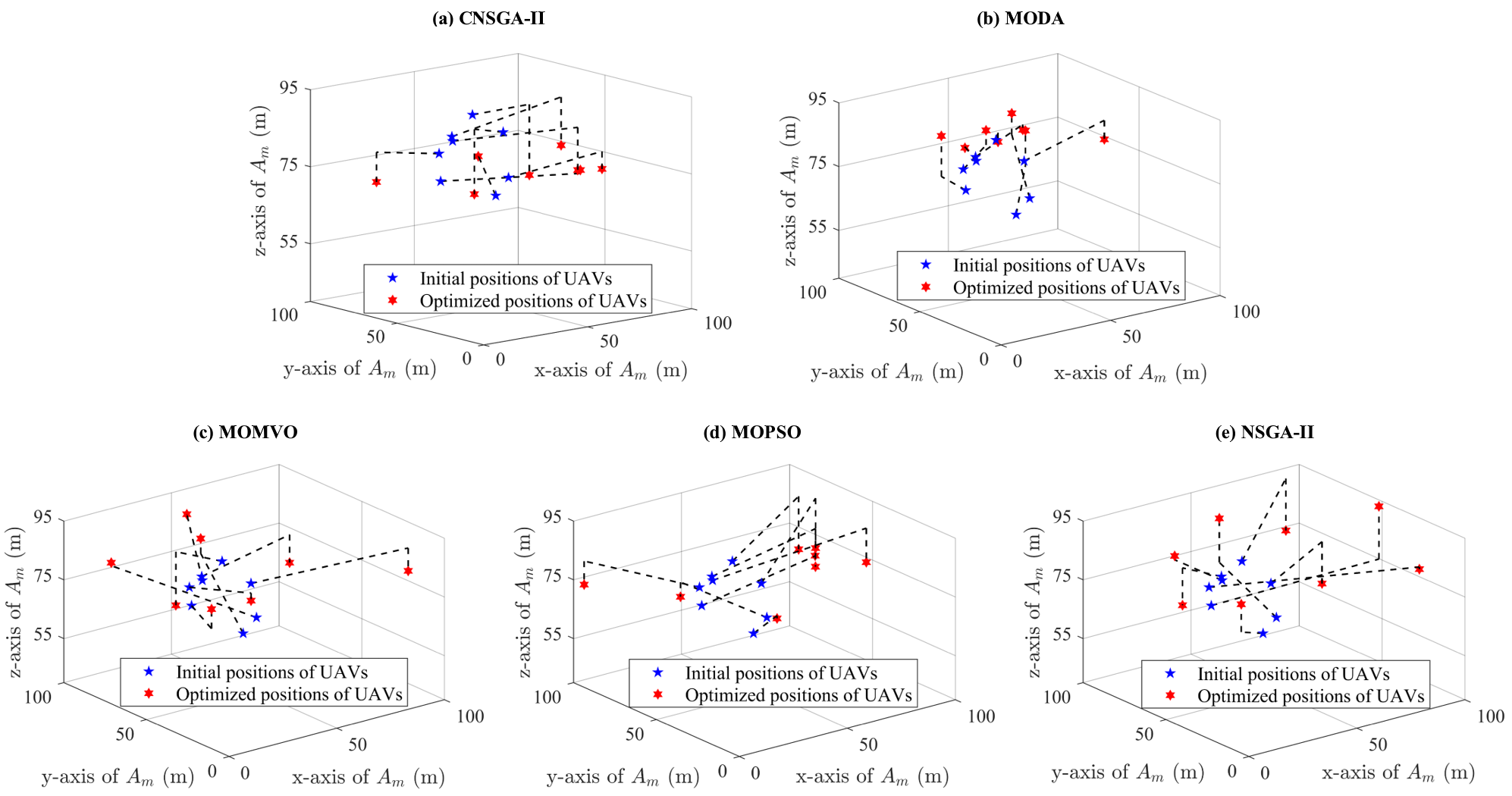}
	\caption{Flight paths of UAVs obtained by various algorithms (8 UAVs). (a) CNSGA-II. (b) MODA. (c) MOMVO. (d) MOPSO. (e) NSGA-II.}
	\label{fig:Tra_8UAVs}
\end{figure*}
\subsubsection{The Larger-scale UAV-enabled Data Collection Network with 16 UAVs}
\par In this section, the number of UAVs is set to 16 while keeping the other parameters in our considered system fixed.
\begin{figure}
	\centering
	\includegraphics[width=3.5in]{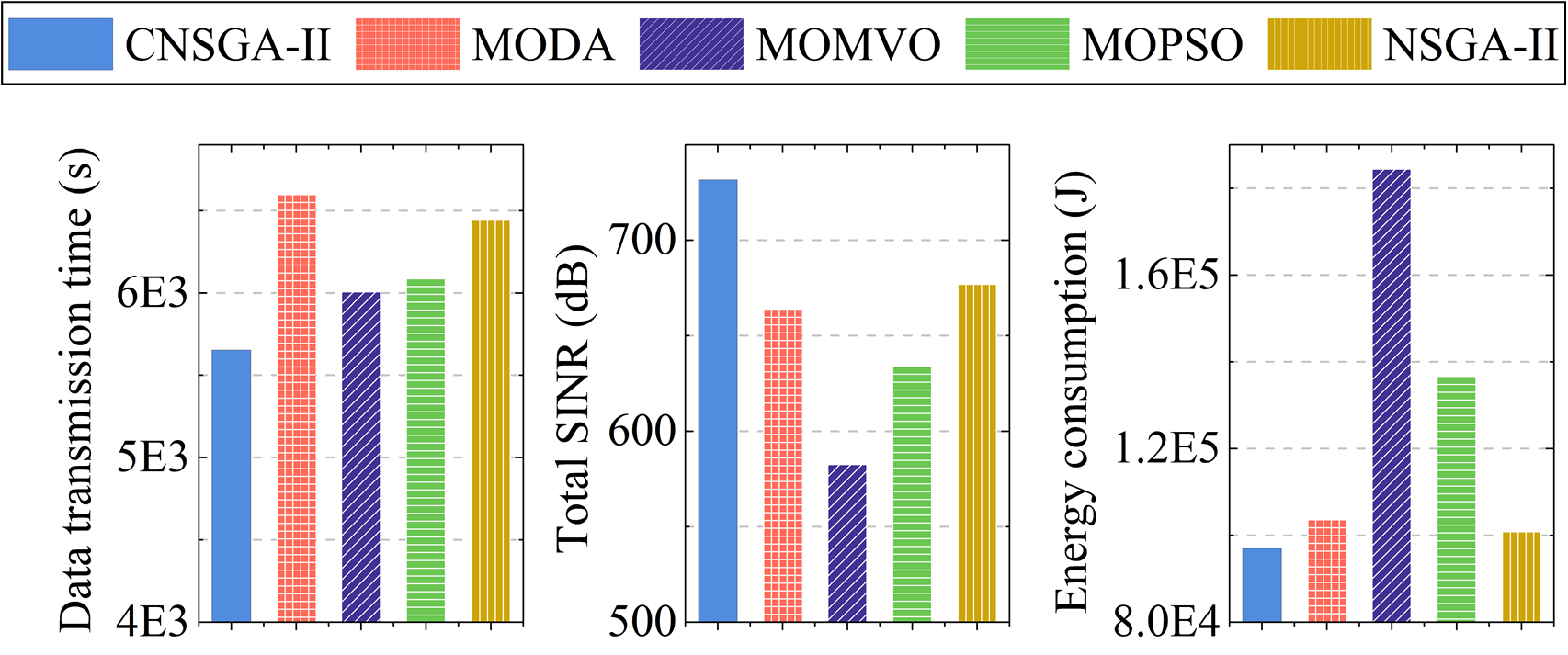}
	\caption{Optimization results obtained by various algorithms (16 UAVs).}
	\label{fig:results_16UAVs}
\end{figure}
\par Fig. \ref{fig:results_16UAVs} shows the optimization results on three objectives obtained by various algorithms in the larger-scale case. The CNSGA-II algorithm still maintains its performance advantage when expanded to a larger-scale UAV-enabled data collection network, which means that the proposed CNSGA-II has validity, superiority, and certain universality. However, compared with the simulation results in the smaller-scale UAV network, when the number of UAVs increases and transmission power remains constant, the signals emitted by each UAV will generate a higher mainlobe by superposition, which can effectively improve the data transmission efficiency of the UAV-enabled VAA. Similarly, the interference of the VAA on terrestrial network devices also will be more severe. In addition, the total energy consumption generated by all UAVs will significantly increase due to the increasing number of UAVs. Moreover, although CNSGA-II maintains effective performance up to the case of 16 UAVs, scalability challenges may emerge in even larger networks due to increased computational complexity and optimization difficulty.
\begin{figure}
	\centering
	\includegraphics[width=3.5in]{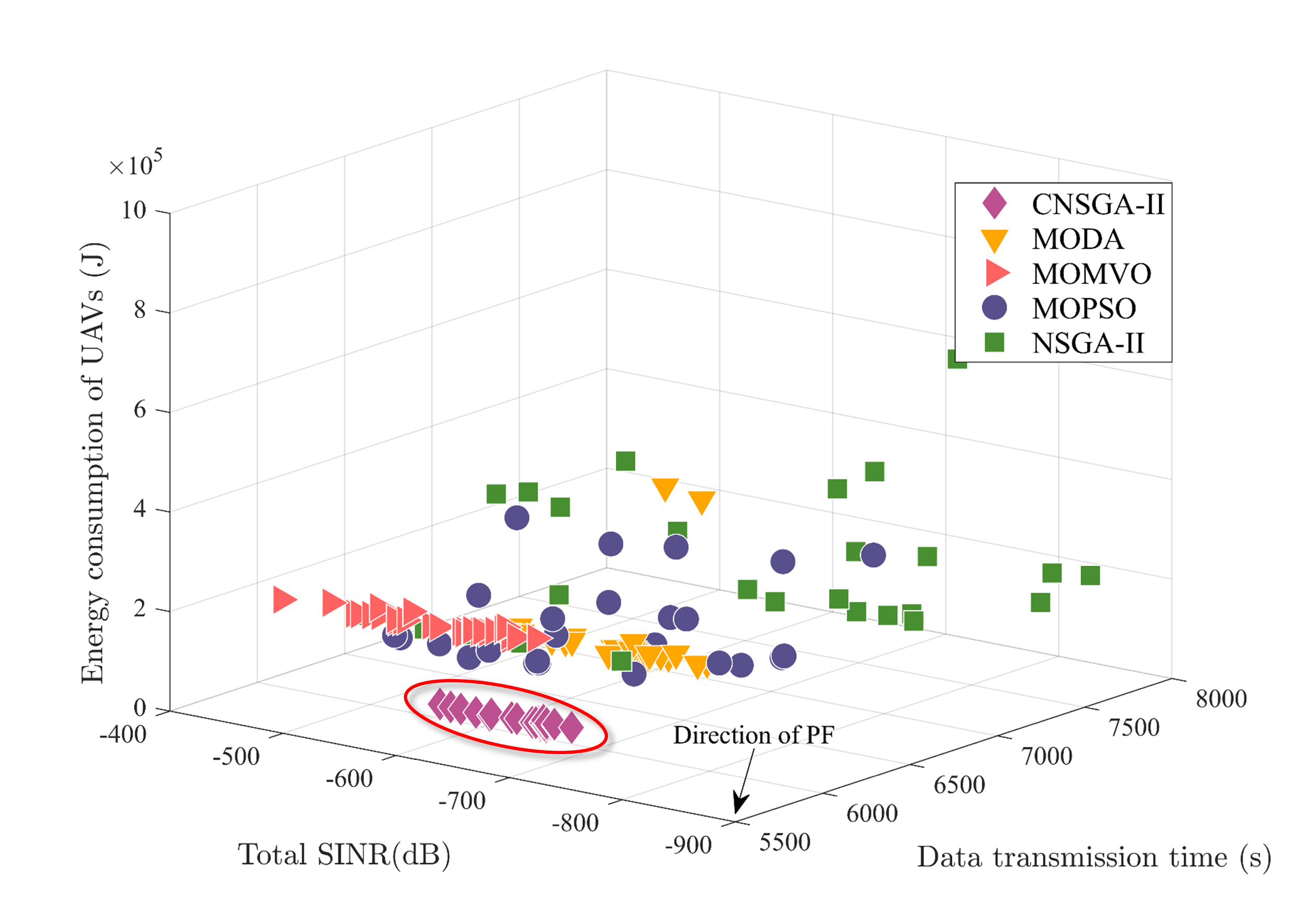}
	\caption{Solution distributions of various algorithms (16 UAVs).}
	\label{fig:solution_distributions_16UAVs}
\end{figure}
\par Fig. \ref{fig:solution_distributions_16UAVs} gives the solution distributions obtained by various algorithms for solving the formulated MOOP. As can be seen, the solutions obtained by our proposed algorithm are closer to the PF and well-distributed. This indicates that the proposed CNSGA-II algorithm is still effective and finds different trade-offs between interference mitigation performance and data transmission efficiency and energy efficiency in the larger-scale case. This is because our proposed chaos-based initialization operator generates higher-quality initial individuals, thus accelerating the optimization process. Furthermore, the chaos-based hybrid solution crossover and mutation strategies as well as the improved operator based on an elimination mechanism in CNSGA-II enhance the global and local search abilities. However, the computational complexity of CNSGA-II increases significantly with larger-scale UAV networks. In this case, the growing search space and optimization complexity will lead to higher computational demands and longer convergence times.
\begin{figure*}
	\centering
	\includegraphics[width=7in]{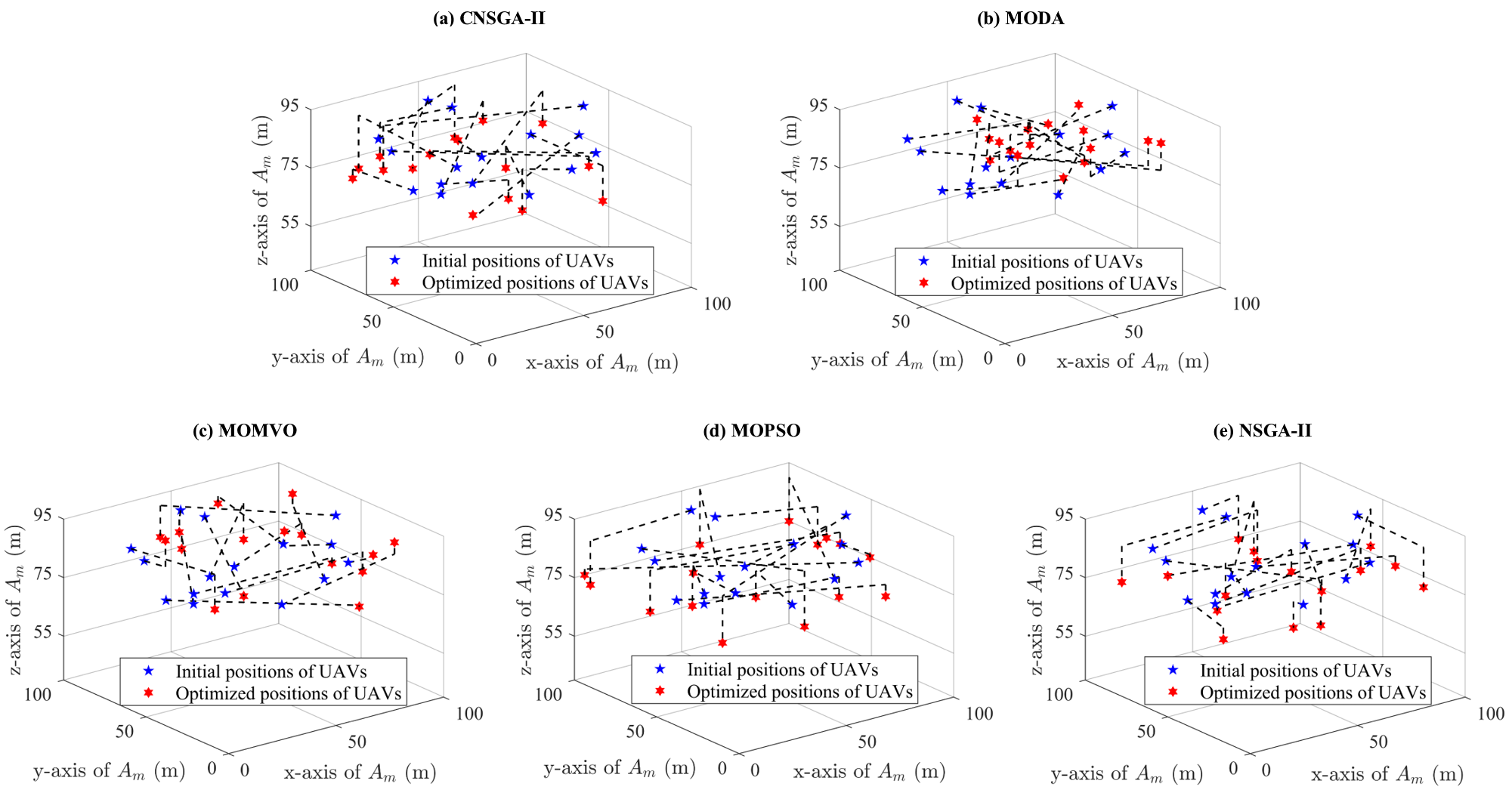}
	\caption{Flight paths of UAVs obtained by various algorithms (16 UAVs). (a) CNSGA-II. (b) MODA. (c) MOMVO. (d) MOPSO. (e) NSGA-II.}
	\label{fig:Tra_16UAVs}
\end{figure*}
\par Fig. \ref{fig:Tra_16UAVs} shows the flight paths of UAVs obtained by various algorithms in the larger-scale case. Note that these UAVs are transmitting the collected data with the first BS. As we can see, the total flight distance obtained by CNSGA-II is relatively shorter, which indicates that UAVs can consume less energy for propulsion. The reason may be that the chaos-based initialization operator improves the distribution of initial solutions, enhancing the initial search performance. However, while clustering UAVs near their initial positions reduces energy consumption in smaller-scale or controlled environments, this approach may not scale effectively in larger or more dynamic networks where wider UAV dispersion is necessary for optimal coverage and performance.

\subsubsection{Performance Verification of the CB-based Interference Mitigation Strategy}
\par This article investigates the communication problem in a UAV-to-ground interference mitigation scenario. In this scenario, GUs need to communicate with BSs, while the A2G transmission between UAVs and the corresponding target BS can cause interference, degrading the signal quality and communication efficiency from GUs to BSs. As described in~\cite{Mozaffari2016} and~\cite{Mei2020}, the interference effect can be quantified by the change in SINR. To evaluate and optimize communication quality, we use the total SINR of communication of GUs with BSs as a metric, where the SINR is cumulative across all interfered BSs.
\par Based on this, we consider two network configurations which are A2G transmission without adopting CB and A2G transmission with CB, respectively. Table~\ref{table:comparison_SINR} presents total SINR optimization results for interfered BSs. It is necessary to emphasize that the high SINR values presented in Table~\ref{table:comparison_SINR} are cumulative across all interfered BSs, reflecting the total interference reduction rather than the SINR of individual links. As can be seen, in the smaller UAV-enabled data collection network with 8 UAVs, the total SINR increases by 4.95 times after applying the CB method. Similarly, in the larger-scale UAV-enabled data collection network with 16 UAVs, total SINR improves by 4.23 times post-CB application. These results demonstrate that the CB-based method effectively reduces interference in the communication process between the UAVs and the target BSs on other communications. This improved performance can be attributed to the CB method which enhances the directivity and gain of the mainlobe and minimizes the sidelobe levels that cause interference by optimizing the excitation current weights and hovering positions of UAVs. However, UAVs may need to hover at specific locations or frequently adjust their positions to achieve optimal beamforming, which leads to additional propulsion energy consumption. Moreover, data sharing and synchronization are required between the UAV swarm in the CB-based method, which may generate some additional communication overhead.
\begin{table}
  \centering
  \caption{The comparison of total SINR of interfered BSs results before and after applying the CB method.} \label{table:comparison_SINR}
  \begin{tabular}{c c c}
  \hline
  \textbf{Scenario}       & \textbf{Before}    & \textbf{After} \\
  \hline
        Smaller-scale     & 173.1319 dB        & 857.2498 dB      \\
        Larger-scale      & 172.7776 dB        & 731.6833 dB      \\ 
  \hline
  \end{tabular}
\end{table}

%
%
\section{Conclusion}
\label{sec:conclusion}
\par This paper has investigated the interference problem affecting non-receivers during UAV uplink transmission. Specifically, we have examined a UAV-enabled data collection and A2G transmission scenario, where UAVs collect data from ground sensors and uplink it to remote BSs, while GUs communicate with these BSs. In this case, the data transmission will interfere with the communications between GUs and BSs. To address this issue, we have formulated the MOOP aimed at minimizing data transmission time, improving the total SINR of interfered BSs, and reducing UAV propulsion energy consumption. Additionally, we have proved that this MOOP is NP-hard and proposed the CNSGA-II with four improved factors to deal with it. Simulation results have indicated that CNSGA-II can generate more excellent solutions compared with other evolutionary computation algorithms. Furthermore, the CB-based interference mitigation method significantly reduces interference, with total SINR results improving by 4.95 times and 4.23 times in two cases with 8 UAVs and 16 UAVs, respectively. This substantial increase in cumulative SINR across interfered BSs has highlighted the efficacy of the CB-based method in mitigating interference for non-receivers.
\par Based on this work, future studies will explore the CB-based interference mitigation method in dynamic network environments, where UAV mobility and channel conditions vary over time. Moreover, advanced machine learning techniques, such as generative artificial intelligence, can be integrated to optimize UAV trajectories, thereby enhancing the performance of UAV networks. Furthermore, the optimization framework can be extended to include UAV battery management strategies for enhancing the sustainability and practicality of UAV communications. In addition, we will further explore the combination of the CB method and MA array for designing a novel hybrid beamforming method. These research directions will enable a deeper understanding of interference challenges in UAV networks while improving overall network efficiency and reliability.

\bibliographystyle{IEEEtran}
\bibliography{myref}

\begin{IEEEbiography}[{\includegraphics[width=1in,height=1.25in, clip,keepaspectratio]{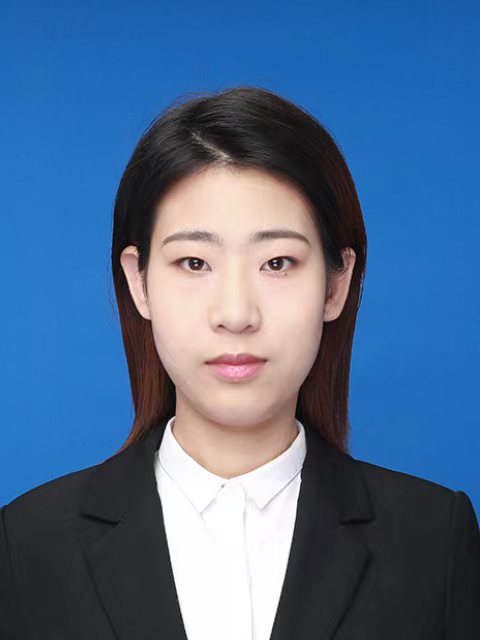}}]{Hongjuan Li} received a BS degree in Computer Science and Technology from Hebei Normal University, Shijiazhuang, China, in 2020, and the M.S. degree in Software Engineering from Jilin University, Changchun, China in 2023. She is currently studying Computer Science at Jilin University to get a Ph.D. degree. Her current research focuses on UAV networks and optimization.
\end{IEEEbiography}

\vspace{-10 mm}
\begin{IEEEbiography}[{\includegraphics[width=1in,height=1.25in,clip,keepaspectratio]{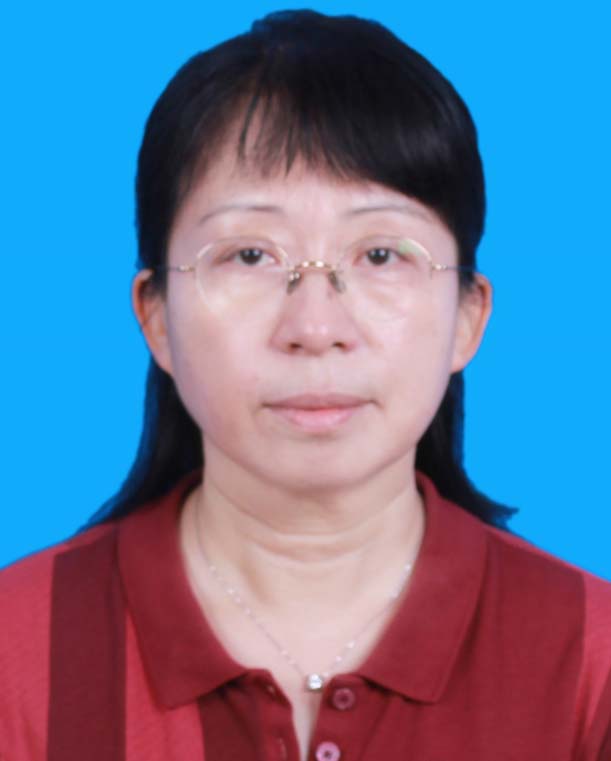}}]{Hui Kang} received the M.E. and Ph.D. degrees from Jilin University in 1996 and 2007, respectively. She is currently a Professor with the College of Computer Science and Technology, Jilin University. Her research interests include information integration and distributed computing.
\end{IEEEbiography}

\vspace{-10 mm}
\begin{IEEEbiography}[{\includegraphics[width=1in,height=1.25in,clip,keepaspectratio]{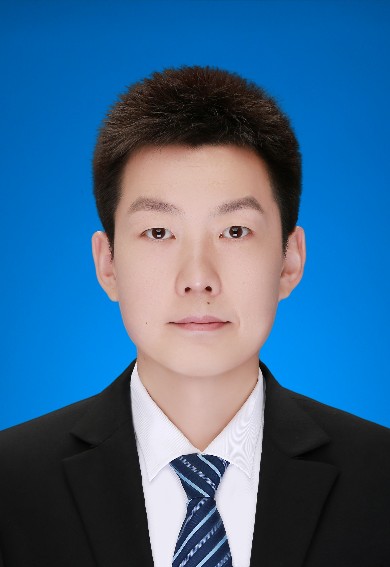}}]{Geng Sun} (Senior Member, IEEE) received the B.S. degree in communication engineering from Dalian Polytechnic University, and the Ph.D. degree in computer science and technology from Jilin University, in 2011 and 2018, respectively. He was a Visiting Researcher with the School of Electrical and Computer Engineering, Georgia Institute of Technology, USA. He is a Professor in College of Computer Science and Technology at Jilin University, and His research interests include wireless networks, UAV communications, collaborative beamforming and optimizations.
\end{IEEEbiography}

\vspace{-10 mm}
\begin{IEEEbiography}
[{\includegraphics[width=1in,height=1.25in,clip,keepaspectratio]{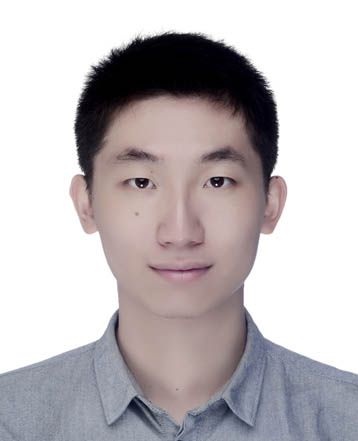}}]
{Jiahui Li} received his B.S. in Software Engineering, and M.S. and Ph.D. in Computer Science and Technology from Jilin University, Changchun, China, in 2018, 2021, and 2024, respectively. He was a visiting Ph.D. student at the Singapore University of Technology and Design (SUTD). He currently serves as an assistant researcher in the College of Computer Science and Technology at Jilin University. His current research focuses on integrated air-ground networks, UAV networks, wireless energy transfer, and optimization.
\end{IEEEbiography}

\vspace{-10 mm}
\begin{IEEEbiography}[{\includegraphics[width=1in,height=1.25in,clip,keepaspectratio]{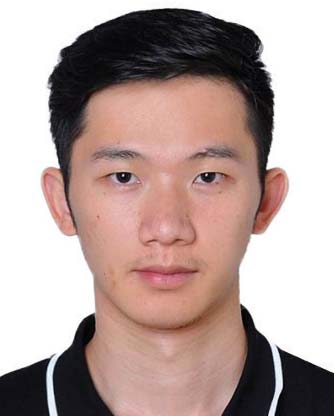}}]{Jiacheng Wang} received the Ph.D. degree from the School of Communication and Information Engineering, Chongqing University of Posts and Telecommunications, Chongqing, China. He is currently a Research Associate in computer science and engineering with Nanyang Technological University, Singapore. His research interests include wireless sensing, semantic communications, and metaverse.
\end{IEEEbiography}

\vspace{-10 mm}
\begin{IEEEbiography}[{\includegraphics[width=1in,height=1.25in,clip,keepaspectratio]{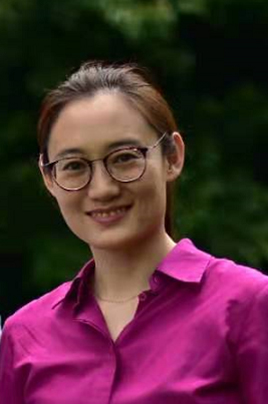}}]{Xue Wang} (Senior Member, IEEE) received the M.E. degree in Communication and Information Systems from Jilin University, China, in 2009, the Ph.D. degree in Communication and Information Systems from Jilin University, China, in 2012. Since 2021, she has been a Full Professor with the Department of Communication Engineering, Jilin University, China. Her research focuses on key technologies in 5/6G communications, the application of artificial intelligence in the next generation communication system, large scale heterogeneous dense access and multi-access edge computing.
\end{IEEEbiography}

\vspace{-10 mm}
\begin{IEEEbiography}[{\includegraphics[width=1in,height=1.25in,clip,keepaspectratio]{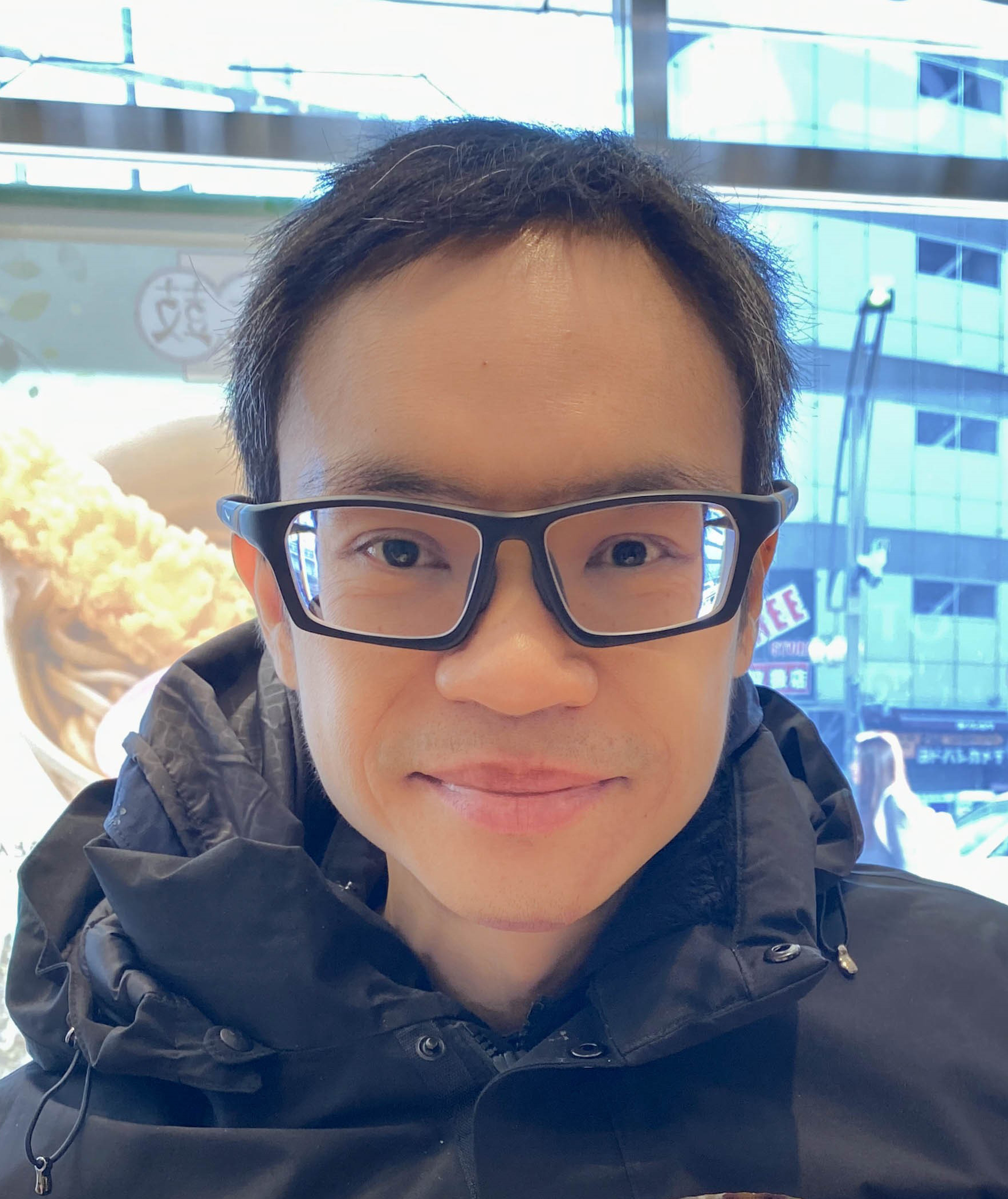}}]{Dusit Niyato} (Fellow, IEEE) is a professor in the College of Computing and Data Science, at Nanyang Technological University, Singapore. He received B.Eng. from King Mongkuts Institute of Technology Ladkrabang (KMITL), Thailand and Ph.D. in Electrical and Computer Engineering from the University of Manitoba, Canada. His research interests are in the areas of sustainability, edge intelligence, decentralized machine learning, and incentive mechanism design.
\end{IEEEbiography}

\vspace{-10 mm}
\begin{IEEEbiography}[{\includegraphics[width=1in,height=1.25in,clip,keepaspectratio]{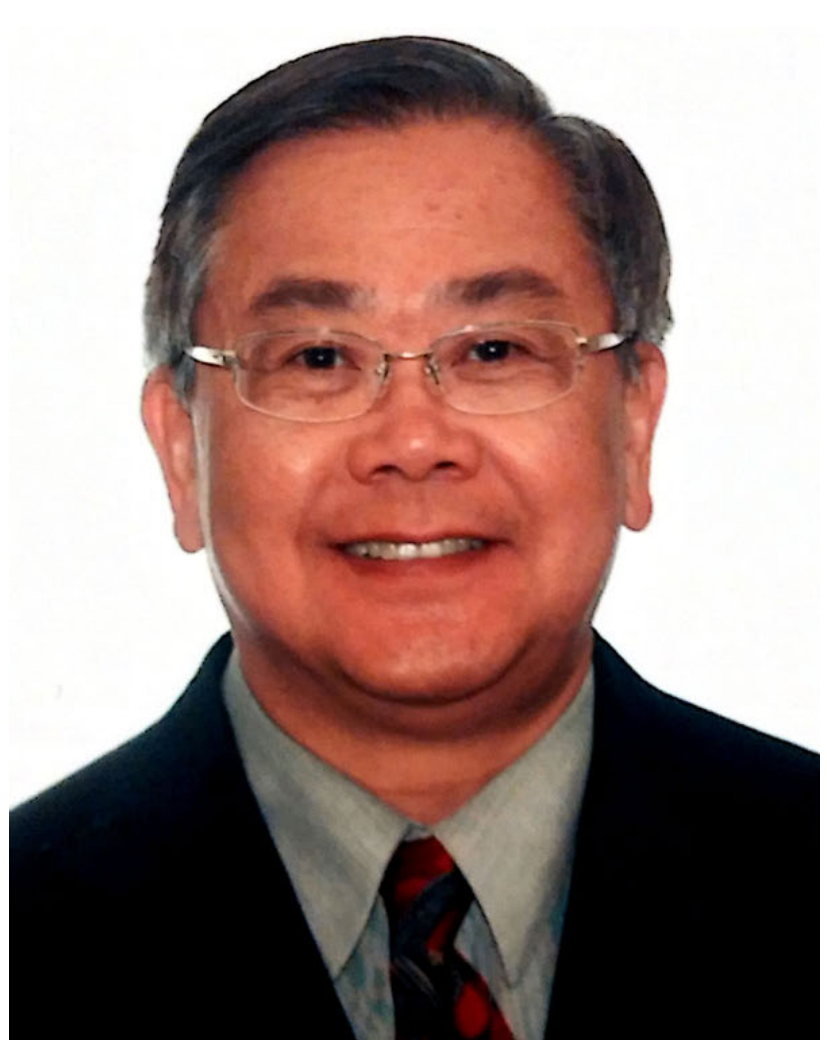}}]{Victor C. M. Leung} (Life Fellow, IEEE) is currently affiliated with Shenzhen MSU-BIT University and was a Distinguished Professor of Computer Science and Software Engineering with Shenzhen University, China. He is also an Emeritus Professor of electrial and computer engineering and the Director of the Laboratory for Wireless Networks and Mobile Systems at the University of British Columbia (UBC). His research is in the broad areas of wireless networks and mobile systems. He has co-authored more than 1300 journal/conference papers and book chapters. Dr. Leung is serving on the editorial boards of \textsc{IEEE Transactions on Green Communications and Networking}, \textsc{IEEE Transactions on Cloud Computing}, \textsc{IEEE Access}, and several other journals.
\end{IEEEbiography}
\end{document}